\documentclass{article}

\usepackage[letterpaper,textheight=9in,textwidth=5.5in,top=1in,
            headheight=12pt,headsep=25pt,footskip=30pt]{geometry}
\usepackage{natbib}
\makeatletter

\newcommand{\delequal}{\mathrel{\stackrel{\scriptscriptstyle\Delta}{=}}}

\renewcommand{\normalsize}{%
  \@setfontsize\normalsize\@xpt\@xipt
  \abovedisplayskip      7\p@ \@plus 2\p@ \@minus 5\p@
  \abovedisplayshortskip \z@ \@plus 3\p@
  \belowdisplayskip      \abovedisplayskip
  \belowdisplayshortskip 4\p@ \@plus 3\p@ \@minus 3\p@}
\normalsize
\renewcommand{\small}{%
  \@setfontsize\small\@ixpt\@xpt
  \abovedisplayskip      6\p@ \@plus 1.5\p@ \@minus 4\p@
  \abovedisplayshortskip \z@  \@plus 2\p@
  \belowdisplayskip      \abovedisplayskip
  \belowdisplayshortskip 3\p@ \@plus 2\p@   \@minus 2\p@}
\renewcommand{\large}{\@setfontsize\large\@xiipt{14}}
\renewcommand{\Large}{\@setfontsize\Large\@xivpt{16}}
\renewcommand{\LARGE}{\@setfontsize\LARGE\@xviipt{20}}
\renewcommand{\huge}{\@setfontsize\huge\@xxpt{23}}
\renewcommand{\Huge}{\@setfontsize\Huge\@xxvpt{28}}

\renewcommand{\scriptsize}{\fontsize{7pt}{8pt}\selectfont}

\renewcommand{\section}{\@startsection{section}{1}{\z@}%
  {-2.0ex \@plus -0.5ex \@minus -0.2ex}{1.5ex \@plus 0.3ex \@minus 0.2ex}%
  {\large\bf\raggedright}}
\renewcommand{\subsection}{\@startsection{subsection}{2}{\z@}%
  {-1.8ex \@plus -0.5ex \@minus -0.2ex}{0.8ex \@plus 0.2ex}%
  {\normalsize\bf\raggedright}}
\renewcommand{\subsubsection}{\@startsection{subsubsection}{3}{\z@}%
  {-1.5ex \@plus -0.5ex \@minus -0.2ex}{0.5ex \@plus 0.2ex}%
  {\normalsize\bf\raggedright}}
\renewcommand{\paragraph}{\@startsection{paragraph}{4}{\z@}%
  {1.5ex \@plus 0.5ex \@minus 0.2ex}{-1em}{\normalsize\bf}}

\renewcommand{\footnoterule}{\kern-3\p@ \hrule width 12pc \kern 2.6\p@}
\def\@listi{\leftmargin\leftmargini}
\renewcommand{\maketitle}{\par
  \begingroup
    \renewcommand{\thefootnote}{\fnsymbol{footnote}}
    \renewcommand{\@makefnmark}{\hbox to \z@{$^{\@thefnmark}$\hss}}
    \long\def\@makefntext##1{\parindent 1em\noindent
      \hbox to 1.8em{\hss $\m@th ^{\@thefnmark}$}##1}
    \thispagestyle{plain}\@maketitle\@thanks
  \endgroup
  \let\maketitle\relax\let\thanks\relax}
\renewcommand{\@maketitle}{\vbox{\hsize\textwidth\linewidth\hsize
    \vskip 0.1in
    \hrule height 4\p@ \vskip 0.25in \vskip -\parskip
    \centering{\LARGE\bf \@title\par}
    \vskip 0.29in \vskip -\parskip \hrule height 1\p@ \vskip 0.09in
    \def\And{\end{tabular}\hfil\linebreak[0]\hfil
      \begin{tabular}[t]{c}\bf\rule{\z@}{24\p@}\ignorespaces}%
    \def\AND{\end{tabular}\hfil\linebreak[4]\hfil
      \begin{tabular}[t]{c}\bf\rule{\z@}{24\p@}\ignorespaces}%
    \begin{tabular}[t]{c}\bf\rule{\z@}{24\p@}\@author\end{tabular}%
    \vskip 0.3in \@minus 0.1in}}
\renewenvironment{abstract}{\vskip 0.075in\centerline{\large\bf Abstract}%
  \vspace{0.5ex}\begin{quote}}{\par\end{quote}\vskip 1ex}
  
\makeatother
\usepackage[utf8]{inputenc}
\usepackage{amsmath,amssymb,amsthm,mathtools}
\usepackage{booktabs}
\usepackage{enumitem}
\usepackage{xcolor}
\usepackage{tikz}
\usetikzlibrary{arrows.meta,patterns,positioning}
\usepackage[hidelinks]{hyperref}

\newcommand{\Prob}{\mathbb{P}}
\newcommand{\yx}[2]{y_{{#1}_{x_{#2}}}}   
\newcommand{\cM}{\mathcal{M}}
\newcommand{\cT}{\mathcal{T}}
\newcommand{\cG}{\mathcal{G}}
\newcommand{\PNS}{\mathrm{PNS}}
\newcommand{\PSub}{\mathrm{PSub}}
\newcommand{\PRep}{\mathrm{PRep}}
\newcommand{\PN}{\mathrm{PN}}
\newcommand{\one}{\mathbf{1}}

\theoremstyle{plain}
\newtheorem{theorem}{Theorem}
\newtheorem{lemma}[theorem]{Lemma}
\newtheorem{proposition}[theorem]{Proposition}
\newtheorem{corollary}[theorem]{Corollary}
\theoremstyle{definition}
\newtheorem{definition}[theorem]{Definition}

\title{Unified Sharp Bounds on Probabilities of Causation}

\author{%
  Xin Shu \quad Ang Li\\
  Department of Computer Science, Florida State University\\
  \texttt{xs24a@fsu.edu, angli@cs.fsu.edu}
}

\begin{document}

\maketitle

\begin{abstract}

Probabilities of causation (PoCs) have developed steadily, from the binary measures first introduced by Pearl to the non-binary setting recently
studied by Shu et al. As the theory has matured, PoCs have found applications in
a growing range of areas, including personalized medicine, marketing and legal attribution. Shu et al. proposed closed-form bounds for multi-valued treatments and outcomes and proved them valid, but left their sharpness in high dimensions open. We close this gap with a mathematical proof. We further show that $\PNS$, $\PSub$, $\PRep$ and $\PN$
are all special cases of a single, more general formula. Combined with their equivalence classes, this settles the question of sharp bounds for discrete PoCs from experimental and observational data: in every dimension, they are given in closed form by a single theorem.
\end{abstract}

\section{Introduction}\label{sec:intro}

Many decisions turn on how a particular individual would respond to different
treatments, not on how the population responds on average. A retailer may want to know
whether a customer would buy at one price but not at another; a marketer may want to
know whether someone would answer an e-mail campaign but ignore a phone call. Questions
of this kind are probabilities of causation (PoCs) \citep{pearl1999probabilities}. They
concern a unit's outcomes under several treatments at once, and since we never observe
more than one of these for the same unit, PoCs are in general not point-identified.
Experimental data $\Prob(\yx{i}{j})$ and observational data $\Prob(x_j,y_i)$ only
determine an interval, and the quantity of interest is the \emph{sharp} interval, i.e.\
the set of all values consistent with the data.

When treatment and outcome are binary, the sharp bounds have long been known
\citep{tian2000probabilities}. They were derived with Balke's linear-programming method
\citep{balke1995probabilistic,balke1997bounds} and later proved analytically
\citep{li2019unit,li2022unit}. Once the variables take more than two values, things get
harder. Balke's program still applies, but it needs one variable for every pair of
response type and received treatment, which for $n$-valued variables means
$n^n\cdot n$ variables. It can therefore only be solved for small $n$, and it gives no
closed form. \citet{zhang2022partial} solve an optimisation problem for each instance,
and \citet{li2024probabilities} give recursive bounds that are valid but not sharp.
\citet{Shu_2026} derived closed-form bounds for four families of PoCs, $\PNS(k)$,
$\PSub(k,p)$, $\PRep(k,q)$ and $\PN(k,p,q)$, and showed that equivalence classes and
relabelling reduce general discrete PoCs to these four. They proved the bounds valid in
every dimension and checked numerically that they are sharp for three- and four-valued
variables, but whether they are sharp in general was left as a conjecture. The
difficulty is that a direct proof has to handle the vertices of a polytope whose
dimension grows faster than exponentially in $n$.

\paragraph{Contributions.}
\begin{itemize}[leftmargin=*,itemsep=1pt,topsep=2pt]
\item We prove that the closed-form bounds of \citet{Shu_2026} for $\PNS(k)$,
$\PSub(k,p)$, $\PRep(k,q)$ and $\PN(k,p,q)$ are sharp in every dimension.
\item We prove two lemmas that make the sharpness proof manageable. The first reduces
Balke's program from $n^n\cdot n$ variables to $2^k\cdot n$, and the second reduces the
$2^k$ candidate cuts for the lower bound to just $k+2$.
\item We show that all four families, together with their relabellings, are special
cases of one general formula, which gives the sharp bounds for any such query in
closed form.
\end{itemize}

Section~\ref{sec:main} gives the full proof for $\PNS(k)$, and extends it to the other three families. Section~\ref{sec:unified} combines all four into a single theorem.
\section{Preliminaries}\label{sec:prelim}
This section recalls the definitions of probabilities of causation and the notation of
structural causal models (SCMs) \citep{galles1998axiomatic,halpern2000axiomatizing}.
Readers who know SCMs can skip it.

In an SCM, counterfactuals have a precise meaning. The statement ``$Y$ would have been
$y$ had $X$ been $x$'' is written $Y_x=y$, where the subscript $x$ refers to a
hypothetical intervention. To keep the notation short, we write $y_x$, $y_{x'}$, $y'_x$
and $y'_{x'}$ for the events $Y_x=y$, $Y_{x'}=y$, $Y_x=y'$ and $Y_{x'}=y'$. Throughout,
$X$ is the treatment and $Y$ the outcome. Unless stated otherwise, experimental data
are causal effects such as $P(y_x)$, and observational data are joint probabilities
such as $P(x,y)$.

When $X$ and $Y$ are binary, we write $x$ and $x'$ for $X=\text{true}$ and
$X=\text{false}$, and $y$ and $y'$ for $Y=\text{true}$ and $Y=\text{false}$.
\citet{pearl1999probabilities} defined three basic probabilities of causation.

\begin{definition}[Probabilities of causation \citep{pearl1999probabilities}]
\[
\text{PN}\delequal P(y'_{x'}\mid x,y),\qquad
\text{PS}\delequal P(y_x\mid y',x'),\qquad
\text{PNS}\delequal P(y_x,y'_{x'}).
\]
\end{definition}

PNS is the probability that $y$ responds to $x$ in both directions: $y$ occurs with
$x$ and fails to occur without it. It therefore captures both the sufficiency and the
necessity of $x$ for $y$. \citet{tian2000probabilities} obtained tight bounds on PN, PS
and PNS with Balke's linear-programming method \citep{balke1995probabilistic}, and
\citet{li2019unit,li2022unit} later proved these bounds analytically.

Now let $X$ and $Y$ each take $n$ values. \citet{li2024probabilities} named four
multi-valued probabilities of causation, PNS, PSub, PRep and PN, and gave recursive
bounds for them. \citet{Shu_2026} kept the names but rewrote each quantity so that the
$j$-th counterfactual always pairs $y_j$ with $x_j$:
\begin{align*}
\mathrm{PNS}(k)&=P({y_1}_{x_1},\ldots,{y_k}_{x_k})
&&\text{(probability of necessity and sufficiency)},\\
\mathrm{PSub}(k,p)&=P({y_1}_{x_1},\ldots,{y_k}_{x_k},x_p)
&&\text{(probability of substitute)},\\
\mathrm{PRep}(k,q)&=P({y_1}_{x_1},\ldots,{y_k}_{x_k},y_q)
&&\text{(probability of replacement)},\\
\mathrm{PN}(k,p,q)&=P({y_1}_{x_1},\ldots,{y_k}_{x_k},x_p,y_q)
&&\text{(probability of necessity)},
\end{align*}
with $k\le n$ and $p\ne j$ for $1\le j\le k$. Here $\mathrm{PNS}(k)$ is the probability that the same individual would have outcome $y_j$ under $x_j$ for every $j=1,\dots,k$. \citet{Shu_2026} derived closed-form bounds for all four quantities, and by their equivalence-class and replaceability theorems, any discrete PoC can be brought into one of these four forms. We refer to \citet{Shu_2026} for the bounds and the two theorems, in this paper we prove that the bounds are sharp.

\section{Notation and Definition}\label{sec:setup}
The treatment $X$ takes values $x_1,\dots,x_n$ and the outcome $Y$ takes values $y_1,\dots,y_n$; as in \citet{Shu_2026}, unequal cardinalities are handled by adding values of probability zero. We write $[n]=\{1,\dots,n\}$. The index $i$ runs over outcomes and $j,\ell$ over treatments; $p$ denotes an additional treatment and $q$ an additional outcome.

We work in structural causal models without further structural assumptions, and we reason with events throughout. With a slight abuse of notation, we write $x_j$ for the event $\{X=x_j\}$, i.e.\ that the unit received treatment $x_j$, and $y_i$ for the event $\{Y=y_i\}$, i.e.\ that the observed outcome is $y_i$. The counterfactual event $\yx{i}{j}$ means ``under $x_j$ the outcome would be $y_i$''; for fixed $j$ these events form a partition. If we arrange the observational probabilities $\Prob(x_j,y_i)$ in a table with one column per treatment and one row per outcome, the event $x_j$ is the $j$-th \emph{column} and each entry $Q_{ij}:=(x_j,y_i)$ is a \emph{cell}; as for $b_{ij}$, the first index of $Q$ is the row and the second the column. The columns partition the sample space, so $\Prob(E)=\sum_j\Prob(E,x_j)$ for every event $E$.

We now introduce new notations for probabilities:
\begin{equation}
a_{ij}=\Prob(\yx{i}{j})\ \ \text{(experimental)},\quad
b_{ij}=\Prob(x_j,y_i)\ \ \text{(observational)},\quad
c_j=\Prob(x_j)=\textstyle\sum_ib_{ij},
\end{equation}
The two kinds of data are linked by \emph{consistency} \citep{galles1998axiomatic, pearl2009causality, pearl2010consistency}: a unit that received $x_j$ has outcome $y_i$ exactly when its potential outcome under $x_j$ is $y_i$, i.e.\ $x_j\cap\yx{i}{j}=x_j\cap y_i$. Hence
\begin{equation}\label{eq:consistency}
b_{ij}=\Prob(Q_{ij})=\Prob\big(x_j, \yx{i}{j}\big).
\end{equation}

A \emph{compatible model} is a joint law of a response type
$R=(R_1,\dots,R_n)\in[n]^n$, where $R_j:=Y_{x_j}$, and a received treatment $X\in[n]$, with $\Prob(R_j=i)=a_{ij}$ and $\Prob(R_j=i,X=j)=b_{ij}$; the observed outcome is $R_X$. Let $\cM$ be the set of compatible models, assumed non-empty.

A value $r=(r_1,\dots,r_n)\in[n]^n$ of $R$ is also called a response type; it lists an individual's outcome under every treatment ($r_\ell=i$ means $\yx{i}{\ell}$ holds). It is a function from treatments to outcomes, so there are $n^n$ types.

Let's fix $1\le k\le n$ and put
\begin{align}\label{eq:target}
&A_j:=\yx{j}{j},\quad &&S:=\textstyle\bigcap_{j\le k}A_j,\nonumber\\
&\alpha_j:=a_{jj},\quad &&\beta_j:=b_{jj},\nonumber\\
&d_j:=\alpha_j-\beta_j,\quad &&s_j:=1-c_j-d_j .
\end{align}

By \eqref{eq:consistency}, $A_j\cap x_j$ \emph{is} the observed cell $Q_{jj}=(x_j,y_j)$, so $\beta_j$ is the only observed part of $A_j$. But $A_j$ is defined for every individual, whichever treatment it actually received, so it also has mass outside column $x_j$. This mass is $d_j=\Prob(A_j\setminus x_j)=\Prob(A_j)-\Prob(A_j,x_j)$, and it is spread over the other columns $x_\ell$, $\ell\ne j$. We write
\[
m_{j\ell}:=\Prob(A_j,x_\ell)
\]
for the mass of $A_j$ in column $x_\ell$. Thus $m_{jj}=\beta_j$,
$\sum_{\ell\ne j}m_{j\ell}=d_j$ and $0\le m_{j\ell}\le c_\ell$. Only the diagonal entry $m_{jj}$ is observed; for $\ell\ne j$, $m_{j\ell}$ is not determined by the data.

Each of the four families $\PNS$, $\PSub$, $\PRep$ and $\PN$ has the form $\theta=\Prob(S, G)$ for some factual event $G$, i.e.\ a union of cells:
\begin{align}\label{eq:families}
&\PNS(k){:}\ G=\Omega, &&\PSub(k,p){:}\ G=x_p,\ \nonumber\\ 
&\PRep(k,q){:}\ G=y_q, &&\PN(k,p,q){:}\ G=(x_p,y_q),
\end{align}
with $p>k$. Here $\Omega$ denotes the whole sample space. Queries with other target outcomes reduce to these by relabelling. Let $\Theta:=\{\theta(M):M\in\cM\}$.

Let $\cG:=\{(i,\ell):Q_{i\ell}\subseteq G\}$ be the set of cells that make up the factual event $G$ (all cells for $\PNS$, column $p$ for $\PSub$, row $q$ for $\PRep$, the single cell $Q_{qp}$ for $\PN$); a pair $(i,\ell)\in\cG$ refers to row $i$ (the observed
outcome) and column $\ell$ (the received treatment). For a response type $r\in[n]^n$, let
\[
W(r):=\{j\le k:\ r_j=j\}\subseteq[k]
\]
record which of the targets $A_1,\dots,A_k$ it satisfies. Call a triple $(W,i,\ell)$, with $W\subseteq[k]$ and $i,\ell\in[n]$, \emph{consistent} if, whenever $\ell\le k$, $\ell\in W\iff i=\ell$. This is consistency: a unit that received $x_\ell$ ($\ell\le k$) satisfies $A_\ell$ if and only if its observed outcome is $y_\ell$.

\begin{lemma}\label{lem:star}
For every $j\le k$,
\begin{equation}\tag{$\star$}\label{eq:star}
s_j=1-c_j-d_j\ \ge\ 0 .
\end{equation}
\end{lemma}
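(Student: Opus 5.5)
We need $1-c_j-\alpha_j+\beta_j\ge 0$, i.e.\ $\alpha_j \le 1-c_j+\beta_j$. The plan is to read this as a statement about how much of $A_j$ can sit outside column $x_j$, using the non-emptiness of $\cM$.

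Fix any compatible model $M\in\cM$; one exists by assumption. Decompose $A_j$ over the partition by columns, $x_j$ versus its complement:
\[
\alpha_j=\Prob(A_j)=\Prob(A_j,x_j)+\Prob(A_j\setminus x_j).
\]
By the consistency relation \eqref{eq:consistency}, $\Prob(A_j,x_j)=\Prob(x_j,\yx{j}{j})=b_{jj}=\beta_j$. The second term satisfies $\Prob(A_j\setminus x_j)\le\Prob(\Omega\setminus x_j)=1-c_j$, since the columns partition the sample space and $\Prob(x_j)=c_j$. Combining gives $d_j=\alpha_j-\beta_j\le 1-c_j$, which is exactly \eqref{eq:star}. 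Equivalently, $s_j=\Prob(A_j^c\setminus x_j)=\sum_{\ell\ne j}(c_\ell-m_{j\ell})\ge 0$ term by term, using $m_{j\ell}\le c_\ell$.

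The only point needing care is that $\alpha_j$, $\beta_j$ and $c_j$ are given data. The inequality therefore holds only because some model realizes them, and that is why the assumption $\cM\neq\emptyset$ is invoked. Otherwise the proof is a one-line computation, and I expect no real obstacle.
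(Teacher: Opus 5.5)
Your proof is correct. The paper splits over outcomes rather than over columns. It writes $s_j=(1-a_{jj})-(c_j-b_{jj})=\sum_{i\ne j}(a_{ij}-b_{ij})$ and uses $b_{ij}=\Prob(x_j,\yx{i}{j})\le\Prob(\yx{i}{j})=a_{ij}$ row by row. You instead split $A_j$ into its part inside $x_j$, pinned to $\beta_j$ by consistency, and its part outside, bounded by $1-c_j$.

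The two arguments sum the same quantity in a different order: $s_j=\Prob(A_j^c\setminus x_j)=\sum_{i\ne j}\sum_{\ell\ne j}\Prob(\yx{i}{j},x_\ell)$.

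\begin{itemize}
\item The paper's order writes $s_j$ as a sum of data-level differences $a_{ij}-b_{ij}$. This shows \eqref{eq:star} follows from the elementary compatibility condition $b_{ij}\le a_{ij}$, and it is the form the paper reuses for general targets $i_j$ in Section~\ref{sec:unified}.
\item Your order makes \eqref{eq:star} read directly as ``$A_j$ has enough room outside its own column.'' That is how it is used in Step~2 of Lemma~\ref{lem:pns-attain} and when completing the flow in Lemma~\ref{lem:flow}. Your argument also carries over unchanged to general $i_j$.
\end{itemize}

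Your remark about $\cM\neq\varnothing$ is apt. The paper's proof relies on a model implicitly in the same way, when it asserts $b_{ij}\le a_{ij}$.
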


\begin{proof}
Since $b_{ij}=\Prob(x_j,\yx{i}{j})\le\Prob(\yx{i}{j})=a_{ij}$, $\sum_i a_{ij}=1$ and
$\sum_i b_{ij}=c_j$,
\[
s_j=(1-a_{jj})-(c_j-b_{jj})=\sum_{i\ne j}a_{ij}-\sum_{i\ne j}b_{ij}
=\sum_{i\ne j}(a_{ij}-b_{ij})\ \ge\ 0. \qedhere
\]
\end{proof}

This inequality is the key to the lower bound: in Lemma~\ref{lem:small} it rules out all candidate lower bounds except the three in Theorem~\ref{thm:pns}.


\begin{definition}\label{def:admissible}
An \emph{admissible system} is a probability space with a partition into cells $Q_{i\ell}$, $i,\ell\in[n]$, where $Q_{i\ell}$ plays the role of the observed cell $(x_\ell,y_i)$ and has $\Prob(Q_{i\ell})=b_{i\ell}$; columns $x_\ell:=\bigcup_iQ_{i\ell}$; and events $A_1,\dots,A_k$ with $\Prob(A_j)=\alpha_j$ and $A_j\cap x_j=Q_{jj}$, the last condition playing the role of consistency. The \emph{value} of an admissible system is $\theta=\Prob(S, G)$, computed in that system, with $G$ read as the corresponding union of cells (i.e. $G = x_p$ in $\PSub(k,p)$). As before, $m_{j\ell}=\Prob(A_j,x_\ell)$ denotes the mass of $A_j$ in column $x_\ell$.
\end{definition}

\paragraph{The two programs.} Recall that $\cG=\{(i,\ell):Q_{i\ell}\subseteq G\}$ is the set of cells that make up the factual event $G$ (all cells for $\PNS$, column $p$ for $\PSub$, row $q$ for $\PRep$, the single cell $Q_{qp}$ for $\PN$). A compatible model is determined by the joint probabilities $\Prob(R=r,X=\ell)$ \citep{balke1997bounds,tian2000probabilities}. 

\emph{Balke's program} treats each of them as a variable $h_1(r,\ell)\ge0$, one for every response type $r\in[n]^n$ and received treatment $\ell\in[n]$, i.e.\ $n^n\cdot n$ variables:
\begin{equation}\label{eq:balke-full}
\begin{aligned}
\max/\min\ & \ \theta(h_1):=\sum_{r:\,W(r)=[k]}\ \sum_{\ell}\one\{(r_\ell,\ell)\in\cG\}\,h_1(r,\ell)\\
\text{s.t.}\ & \ \sum_{r:\,r_j=i}\ \sum_{\ell}h_1(r,\ell)=a_{ij},\qquad
\sum_{r:\,r_j=i}h_1(r,j)=b_{ij}\qquad(i,j\in[n]).
\end{aligned}
\end{equation}

An experimental constraint sums over all received treatments, so it only involves the response type $r$. An observational constraint looks at a single column, so it also needs the received treatment $\ell$. We cannot drop $\ell$ because the observational data cannot be written down without it, and they do carry information. If treatment were assigned independently of the response type, as in a randomized trial, then $b_{ij}=c_j\,a_{ij}$ and the observational data would tell nothing new. They help only when the choice of treatment depends on the response type, i.e.\ under confounding.

The \emph{reduced program} has one variable $h_2(W,i,\ell)\ge0$ for every consistent triple, i.e.\ at most $2^k\cdot n^2$ variables:
\begin{equation}\label{eq:reduced-full}
\begin{aligned}
\max/\min\ & \ \theta(h_2):=\sum_{(i,\ell)\in\cG}h_2([k],i,\ell)\\
\text{s.t.}\ & \ \sum_{W\ni j}\ \sum_{i,\ell}h_2(W,i,\ell)=\alpha_j\quad(j\le k),\qquad
\sum_{W}h_2(W,i,\ell)=b_{i\ell}\quad(i,\ell\in[n]).
\end{aligned}
\end{equation}

Recall that $[n]=\{1,\dots,n\}$. 
The reduced program keeps the experimental constraints only for $a_{jj}$ ($j\le k$), and keeps all observational constraints $\sum_W h_2(W,i,\ell)=b_{i\ell}$. It drops every $a_{ij}$ with $i\ne j$, or when $j>k$. 

For $j\le k$, these constraints and consistency give three more identities. The column $x_j$ has mass $c_j$, the part of it inside $A_j$ has mass $\beta_j$, and the rest has mass $c_j-\beta_j$:

\begin{equation}\label{eq:derived}
\sum_W\sum_i h_2(W,i,j)=c_j,\
\sum_{W\ni j}\sum_i h_2(W,i,j)=b_{jj}=\beta_j,\
\sum_{W\not\ni j}\sum_i h_2(W,i,j)=c_j-\beta_j .
\end{equation}

The feasible set of \eqref{eq:balke-full} is a polytope and $\theta$ is linear, so $\Theta=[\min\Theta,\max\Theta]$ and the sharp bounds are the two optima of \eqref{eq:balke-full}. \citet{Shu_2026} used this program to check sharpness numerically for $n\le4$.

\begin{lemma}[Reduction]\label{lem:reduction}
For each of the four families, Balke's program~\eqref{eq:balke-full} and reduced program~\eqref{eq:reduced-full} attain
exactly the same set of values of $\theta$.
In particular they have the same maximum and the same minimum, so the sharp bounds can be computed from the reduced program. Equivalently, $\Theta$ equals the set of values of admissible systems.
\end{lemma}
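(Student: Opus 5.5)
We prove the two inclusions between value sets: every feasible point of Balke's program~\eqref{eq:balke-full} projects to a feasible point of the reduced program~\eqref{eq:reduced-full} with the same $\theta$, and every feasible point of~\eqref{eq:reduced-full} lifts to a feasible point of~\eqref{eq:balke-full} with the same $\theta$. The equivalence with admissible systems comes along the way, since a feasible $h_2$ is exactly the distribution of the triple $(W,\text{observed outcome},X)$ in an admissible system. Conversely, an admissible system yields such an $h_2$ by setting $h_2(W,i,\ell)=\Prob(Q_{i\ell}\cap\bigcap_{j\in W}A_j\cap\bigcap_{j\in[k]\setminus W}A_j^c)$.

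\emph{Projection.} Given $h_1$, define
\[
h_2(W,i,\ell):=\sum_{r:\,W(r)=W,\ r_\ell=i}h_1(r,\ell).
\]
\begin{itemize}[leftmargin=*,itemsep=1pt,topsep=2pt]
\item Only consistent triples carry mass. If $\ell\le k$ and $r_\ell=i$, then $\ell\in W(r)$ iff $r_\ell=\ell$ iff $i=\ell$.
\item The observational constraints of~\eqref{eq:balke-full} summed over $W$ give $\sum_W h_2(W,i,\ell)=b_{i\ell}$.
\item Summing over $W\ni j$ and all $(i,\ell)$ gives the mass of types with $r_j=j$, which is $a_{jj}=\alpha_j$.
\item The objective matches term by term: $(r_\ell,\ell)\in\cG$ with $W(r)=[k]$ is exactly the cell index of $h_2([k],i,\ell)$.
\end{itemize}

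\emph{Lifting.} This is the main step. Given $h_2$, we must build a full joint law of $(R,X)$. For each $j$ the law must satisfy the unused marginals $a_{ij}$, including $i\ne j$ and $j>k$, and the observational constraints $b_{ij}$. We construct it coordinate by coordinate. Fix a consistent triple $(W,i,\ell)$ with mass $h_2(W,i,\ell)$. The coordinates are assigned as follows:
\begin{itemize}[leftmargin=*,itemsep=1pt,topsep=2pt]
\item $r_\ell=i$ (forced by consistency);
\item $r_j=j$ for $j\in W$;
\item for $j\le k$, $j\notin W$, $j\ne\ell$, choose $r_j\ne j$;
\item for $j>k$, $j\ne\ell$, choose $r_j$ freely.
\end{itemize}
Decouple the coordinates: give the free coordinates $r_j$ a conditional law that depends only on the "slot" (whether $j\notin W$ with $j\le k$, or $j>k$) and on $X\neq j$, and take the coordinates independent given $(W,i,\ell)$. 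Then the constraints split coordinate-wise, and we only need, for each $j$, a distribution on outcomes for the units not in column $x_j$ with the right totals.

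For $j\le k$, the units outside $x_j$ not in $A_j$ have total mass $1-c_j-d_j=s_j$. They must receive outcomes $i\ne j$ with masses $a_{ij}-b_{ij}$, which are nonnegative and sum to $s_j$ by the computation in Lemma~\ref{lem:star}. So we set $\Prob(r_j=i\mid\text{outside }x_j,\ j\notin W)=(a_{ij}-b_{ij})/s_j$, with any choice if $s_j=0$. For $j>k$, the units outside $x_j$ have mass $1-c_j$, and the required outcome masses are $a_{ij}-b_{ij}\ge0$, which sum to $1-c_j$. Proportional assignment works here too. Inside column $x_j$ the outcome is fixed by the cell, which reproduces $b_{ij}$. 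Summing over the two parts gives $a_{ij}$, and the $A_j$ part gives $\alpha_j$ automatically.

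The objective is unchanged because $W$ and the cell are preserved. The expected obstacle is confirming that this product-form conditional law is consistent with the fixed coordinates. Each coordinate $j$ is constrained only through its own marginal, and the $A_j$-membership is already encoded in $W$, so independence across $j$ is harmless; the nonnegativity needed comes exactly from $b_{ij}\le a_{ij}$.
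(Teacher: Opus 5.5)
Your proposal is correct and follows the paper's own argument. The paper proves one inclusion by merging response types according to $(W(r),r_\ell,\ell)$, and the other by splitting each block of $h_2$: it keeps $W$, the observed outcome and the treatment fixed, and draws the remaining coordinates $r_j$ independently from $(a_{ij}-b_{ij})/s_j$ (for $j\le k$, $j\notin W$) or from $(a_{ij}-b_{ij})/(1-c_j)$ (for $j>k$), with nonnegativity coming from $b_{ij}\le a_{ij}$. The one step you leave implicit is that the non-$A_j$ slot outside $x_j$ has mass $s_j$; this holds because consistency forces $\sum_{W\ni j}\sum_i h_2(W,i,j)=b_{jj}=\beta_j$, so the mass of $A_j$ outside $x_j$ is exactly $d_j$.
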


The last sentence holds because they both describe the same thing. In an admissible system, the events $A_1,\dots,A_k$ cut each cell $Q_{i\ell}$ into pieces. Each piece is labelled by the set $W\subseteq[k]$ of events that contain it, and $h_2(W,i,\ell)$ is its mass. Conversely, cutting the cells according to a feasible $h_2$ gives an admissible system, see Appendix.


\begin{proof}[Proof idea]
Both inclusions are constructive (see Appendix). \emph{Merging}: given
a feasible $h_1$, grouping the response types by $W(r)$ and by the observed outcome gives
a feasible $h_2$ with the same value. \emph{Splitting}: given a feasible $h_2$, we keep
$W$, the observed outcome and the received treatment of each block fixed, and draw every
other coordinate $r_j$ independently from an explicit distribution built from the
data. The constraints of \eqref{eq:reduced-full} are exactly what makes the result
satisfy all constraints of \eqref{eq:balke-full}.
\end{proof}


\begin{corollary}\label{cor:functionals}
For $\PNS(k)$, $\cG$ contains all cells, so only $h_2(W,\ell):=\sum_ih_2(W,i,\ell)$ matters, and the reduced program is equivalent to
\begin{equation}\label{eq:reduced}
\begin{aligned}
\max/\min\ & \ \textstyle\sum_{\ell}h_2([k],\ell)\\
\text{s.t.}\ & \ \textstyle\sum_{W\ni j}\sum_{\ell}h_2(W,\ell)=\alpha_j,\\
&\sum_{W\ni j}h_2(W,j)=\beta_j\quad(j\le k),\ \\
&\textstyle\sum_{W}h_2(W,\ell)=c_\ell\quad(\ell\in[n]),
\end{aligned}
\end{equation}
with $2^k\cdot n$ variables. The sharp bounds on $\PNS(k)$ are therefore functions of $(\alpha_1,\dots,\alpha_k,$ $\beta_1,\dots,$ $\beta_k,c_1,\dots,c_n)$ alone, data sharing these $2k+n$ numbers yield the same program as in~\eqref{eq:reduced}.
\end{corollary}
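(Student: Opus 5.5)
The plan is to show that, for $\PNS(k)$, the reduced program~\eqref{eq:reduced-full} and program~\eqref{eq:reduced} attain the same set of objective values. The sharp-bounds statement then follows from Lemma~\ref{lem:reduction}. I would prove the two inclusions separately, one by aggregating a feasible point and one by disaggregating it.

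\emph{Aggregation.} Start from a feasible $h_2(W,i,\ell)$ of~\eqref{eq:reduced-full} and set $h_2(W,\ell):=\sum_i h_2(W,i,\ell)$. Since $\cG$ contains every cell, the objective becomes $\sum_\ell h_2([k],\ell)$, so the value is unchanged. The experimental constraint for $\alpha_j$ passes through directly. The column constraint $\sum_W h_2(W,\ell)=c_\ell$ comes from summing $\sum_W h_2(W,i,\ell)=b_{i\ell}$ over $i$. The constraint $\sum_{W\ni j}h_2(W,j)=\beta_j$ is the second identity in~\eqref{eq:derived}. For that identity I would use consistency of the triples: if $\ell=j\le k$ and $j\in W$, then the only nonzero variable is at $i=j$, so the sum equals $\sum_W h_2(W,j,j)$ over $W\ni j$. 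Any $W\not\ni j$ forces $i\ne j$, so $\sum_W h_2(W,j,j)=b_{jj}$ is carried entirely by $W\ni j$.

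\emph{Disaggregation.} This is the main step. Given a feasible $h_2(W,\ell)$ of~\eqref{eq:reduced}, I need to split each column mass $h_2(W,\ell)$ across rows $i$, producing $h_2(W,i,\ell)$ that satisfies consistency and has row sums $\sum_W h_2(W,i,\ell)=b_{i\ell}$. I would split by column type.

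For $\ell>k$ consistency imposes nothing, so a proportional split $h_2(W,i,\ell)=h_2(W,\ell)\,b_{i\ell}/c_\ell$ works. If $c_\ell=0$, all the masses vanish and the split is trivially zero.

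For $\ell=j\le k$, all mass with $W\ni j$ goes to row $j$. This mass totals $\beta_j=b_{jj}$, which matches that row exactly. The mass with $W\not\ni j$ totals $c_j-\beta_j=\sum_{i\ne j}b_{ij}$, and I spread it proportionally as $h_2(W,\ell)\,b_{i\ell}/(c_\ell-\beta_\ell)$ over rows $i\ne j$. When $c_\ell-\beta_\ell=0$ it is zero. Nonnegativity is preserved throughout, the $\alpha_j$ constraints only involve $\sum_i$ and are therefore untouched, and the objective value is unchanged.

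I expect the only delicate point to be this column-$j$ case, specifically the requirement that the mass with $W\ni j$ in column $j$ equals $b_{jj}$ exactly. That requirement is precisely the $\beta_j$ constraint included in~\eqref{eq:reduced}. The variable count is $2^k\cdot n$ by inspection. Because~\eqref{eq:reduced} involves the data only through $(\alpha,\beta,c)$, the optima are functions of those $2k+n$ numbers alone, and by Lemma~\ref{lem:reduction} they are the sharp bounds.
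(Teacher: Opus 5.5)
Your proposal is correct and takes essentially the same route as the paper. You merge over the observed outcome $i$, using consistency (the second identity in~\eqref{eq:derived}) to obtain the $\beta_j$ constraint, and conversely you split each column mass back into cells, sending the $W\ni j$ mass of a target column into $Q_{jj}$ and spreading the rest in proportion to $b_{i\ell}$, including the degenerate cases $c_\ell=0$ and $c_j=\beta_j$; Lemma~\ref{lem:reduction} then gives the dependence on $(\alpha,\beta,c)$ alone.
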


The proof of Corollary~\ref{cor:functionals} is in Appendix. The corollary also shortens the equivalence-class arguments of
\citet{Shu_2026}: adding outcome or treatment values of probability zero leaves
$(\alpha,\beta,c)$ unchanged up to empty columns.

\paragraph{Sharpness principle.} Soundness and attainability together give sharpness.

\begin{lemma}[Sharpness]\label{lem:principle}
If every admissible system has value $\le U$ and equality holds for at least one
admissible system, then $\max\Theta=U$ is a sharp upper bound. The same holds with
$\ge L$, and $\min\Theta=L$ is a sharp lower bound.
\end{lemma}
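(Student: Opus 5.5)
The plan is to reduce the statement to Lemma~\ref{lem:reduction}, which identifies $\Theta$ with the set of values of admissible systems, and then use the two hypotheses to pin down the maximum. First, Lemma~\ref{lem:reduction} gives $\Theta=\{\theta(\text{system}):\text{system admissible}\}$, i.e.\ every compatible model $M\in\cM$ yields an admissible system with value $\theta(M)$, and every admissible system arises this way from some $M\in\cM$. Hence any statement about the values of admissible systems transfers verbatim to $\Theta$.

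Upper bound. By the first hypothesis every admissible system has value $\le U$, so every element of $\Theta$ is $\le U$. By the second hypothesis some admissible system has value exactly $U$, so $U\in\Theta$. Thus $U=\max\Theta$; in particular the maximum is attained. Since the feasible set of \eqref{eq:balke-full} is a polytope and $\theta$ is linear, $\Theta=[\min\Theta,\max\Theta]$ is a closed interval, so $U$ is its right endpoint. It is the smallest valid upper bound, because any $U'<U$ is violated by the attaining model. This is exactly the meaning of a sharp upper bound.

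Lower bound. The argument is identical with inequalities reversed: all values are $\ge L$ and some value equals $L$, hence $L=\min\Theta$ and $L$ is sharp.

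There is no real obstacle here: all the substance lies in Lemma~\ref{lem:reduction}, which is assumed. The only point needing care is using both directions of that lemma. Merging ($h_1\mapsto h_2$) is what transfers soundness from admissible systems to all compatible models. Splitting ($h_2\mapsto h_1$) is what turns the attaining admissible system into a genuine compatible model, so that $U$ and $L$ actually lie in $\Theta$. I would state this explicitly, noting that $\cM\neq\emptyset$ guarantees $\Theta$ is nonempty.
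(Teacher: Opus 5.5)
Your proposal is correct and follows essentially the same route as the paper: you use Lemma~\ref{lem:reduction} to transfer the bound $\le U$ from admissible systems to every $M\in\cM$, and to turn the attaining admissible system into a model $M_0$ with $\theta(M_0)=U$, which rules out any valid bound $U'<U$ (the paper phrases this last step as a contradiction). Your remark that $\Theta$ is a closed interval is not needed for the argument, but it is harmless.
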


\begin{proof}
By Lemma~\ref{lem:reduction}, $U$ is a valid upper bound: $\theta(M)\le U$ for all
$M\in\cM$. Suppose some $U'$ satisfied (H1) $\theta(M)\le U'$ for every $M\in\cM$ and
(H2) $U'<U$. The attaining system gives, via Lemma~\ref{lem:reduction}, a model
$M_0\in\cM$ with $\theta(M_0)=U$, and (H1) applied to $M_0$ gives $U\le U'<U$, a
contradiction. The lower bound is symmetric.
\end{proof}

\section{Main results}\label{sec:main}
In this section, we first prove sharpness of the bounds on $\PNS(k)$ and then extend it to the other three families.

\subsection{Problem Statement}
Soundness of the bounds for all four families, $\PNS$, $\PSub$, $\PRep$ and $\PN$, was proved by \citet{Shu_2026}. We prove completeness, which together with soundness shows that the bounds are sharp.

We restate the $\PNS(k)$ theorem in our notation below.

\begin{theorem}[$\PNS(k)$]\label{thm:pns}
For $\theta=\Prob(\yx{1}{1},\dots,\yx{k}{k})$, $\Theta=[L,U]$, where
\begin{align*}
L&=\max\Big\{\underbrace{0}_{L_1},\ \underbrace{\textstyle\sum_{j\le k}\alpha_j-(k-1)}_{L_2},\
\underbrace{\max_{i\le k}\big[\beta_i+\textstyle\sum_{j\le k,\,j\ne i}(\alpha_j+c_j-\beta_j)-(k-1)\big]}_{L_3}\Big\},\\
U&=\min\Big\{\underbrace{\textstyle\sum_{j\le k}\beta_j+\sum_{\ell>k}c_\ell}_{U_1},\
\underbrace{\min_{j\le k}\alpha_j}_{U_2},\
\underbrace{\min_{I\subseteq[k],\,|I|\ge2}\tfrac{1}{|I|-1}\textstyle\sum_{j\in I}d_j}_{U_3}\Big\}.
\end{align*}
\end{theorem}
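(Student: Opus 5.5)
The proof has two parts, soundness and attainability, and both are carried out on admissible systems. This is legitimate by Lemma~\ref{lem:reduction} and Corollary~\ref{cor:functionals}. By Lemma~\ref{lem:principle}, it suffices to show two things: every admissible system has value in $[L,U]$, and some admissible system attains $U$ and another attains $L$. Since $\Theta$ is an interval, as noted after \eqref{eq:derived}, this gives $\Theta=[L,U]$.

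\emph{Soundness.} Validity was proved by \citet{Shu_2026}, but I would reprove it in event language, since the same inequalities tell us which constraints must be tight in the constructions.
\begin{itemize}[leftmargin=*,itemsep=1pt,topsep=2pt]
\item $U_2$: immediate, since $S\subseteq A_j$ for every $j\le k$.
\item $U_1$: split $S$ by columns. For $j\le k$ we have $S\cap x_j\subseteq A_j\cap x_j=Q_{jj}$, of mass $\beta_j$, and each column $\ell>k$ contributes at most $c_\ell$.
\item $U_3$: fix $I$ with $|I|\ge2$. A point of $S$ lying in column $x_\ell$ belongs to $A_j\setminus x_j$ for every $j\in I$ with $j\ne\ell$, hence for at least $|I|-1$ indices of $I$. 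Double counting gives $(|I|-1)\theta\le\sum_{j\in I}d_j$.
\item $L_1$ and $L_2$: trivial and Bonferroni, respectively.
\item $L_3$: first note that $\alpha_j+c_j-\beta_j-1=-\Prob(A_j^c\setminus x_j)$. Then use $S\supseteq Q_{ii}\setminus\bigcup_{j\ne i}(A_j^c\cap x_i)$ together with $A_j^c\cap x_i\subseteq A_j^c\setminus x_j$ for $j\ne i$. This gives $\theta\ge\beta_i-\sum_{j\ne i}\Prob(A_j^c\setminus x_j)$, which is $L_3$.
\end{itemize}

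\emph{Attaining $U$.} I would work with the reduced program \eqref{eq:reduced} and choose how much $S$-mass $\sigma_\ell$ to place in each column. The constraints are $\sigma_j\le\beta_j$ for $j\le k$ and $\sigma_\ell\le c_\ell$ for $\ell>k$. Each $A_j$ then needs off-diagonal mass exactly $d_j$, of which $t-\sigma_j$ is already forced by $S$, where $t=\sum_\ell\sigma_\ell$. So we need
\[
t-\sigma_j\le d_j \quad\text{for } j\le k,
\]
and the remainder $d_j-(t-\sigma_j)$ must be placed in the non-$S$ parts of the other columns. These leftovers can overlap freely, because the sets $W\ne[k]$ are arbitrary. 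Overlap is limited only by column capacities, and those are guaranteed by $(\star)$, which says $d_j\le 1-c_j$.

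The feasibility of the $\sigma$'s at $t=U$ is a transportation/flow problem. I would set $t=U$ and distribute the required "deficits" $t-d_j$ (when positive) as $\sigma_j\ge t-d_j$, subject to $\sum\sigma_\ell=t$ and the caps. By a Hall / max-flow argument, infeasibility would exhibit a violated inequality. The obvious candidates are $t\le U_1$, $t\le\alpha_j$, and the subset inequalities $(|I|-1)t\le\sum_{I}d_j$. Identifying exactly these as the only obstructions is the step I expect to be the main obstacle. To get there, I would sort the $d_j$ increasingly and fill the $\sigma$'s greedily, which should show that the binding set $I$ is always an initial segment.

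\emph{Attaining $L$.} Here the aim is to make $S$ as small as possible by spreading the complements $A_j^c$ disjointly wherever possible. A priori the minimization of \eqref{eq:reduced} has $2^k$ cut-type lower bounds, one for each subset $T\subseteq[k]$ whose complements are pooled. The plan is to show, using $(\star)$, that every such cut is dominated by $L_1$, $L_2$, or some $L_3^{(i)}$; this is the role announced for Lemma~\ref{lem:small}. After that reduction, I would give an explicit construction in three cases:
\begin{itemize}[leftmargin=*,itemsep=1pt,topsep=2pt]
\item If the complements $A_j^c$ can be packed disjointly, then $\theta=\max(0,L_2)$.
\item Otherwise, the packing fails inside one diagonal cell $Q_{ii}$, which is forced into every $A_i$. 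In that case, packing the complements $A_j^c\setminus x_j$ disjointly inside $Q_{ii}$ and outside column $x_i$ yields exactly $L_3^{(i)}$.
\end{itemize}
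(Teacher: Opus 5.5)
Your soundness arguments are correct; the $L_3$ bound through $A_j^c\setminus x_j$ is a clean direct proof. Your plan for attaining $U$ follows the paper's route via Lemmas~\ref{lem:pns-alloc} and~\ref{lem:pns-attain}. The step you flag there as the main obstacle is in fact immediate. Each $\sigma_\ell$ ranges over its own interval, $[(t-d_\ell)^+,\beta_\ell]$ for $\ell\le k$ and $[0,c_\ell]$ for $\ell>k$, and the only coupling is $\sum_\ell\sigma_\ell=t$. So a solution exists if and only if three things hold: every interval is non-empty (i.e.\ $t\le\alpha_j$), $t$ is at most the sum of the upper ends ($U_1$), and $\sum_{j\le k}(t-d_j)^+\le t$. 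Since $\sum_{j\le k}(t-d_j)^+=\max_{I\subseteq[k]}\sum_{j\in I}(t-d_j)$, the last condition is exactly the $U_3$ family. No Hall argument or greedy sorting is needed.

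The genuine gap is in attaining $L$. Showing that every cut-type bound is dominated by $L_1$, $L_2$ or some $L_3^{(i)}$ only reproves validity. Sharpness needs the minimum over admissible systems to equal the largest such bound, and you neither prove such a duality nor give constructions that provably cover all cases. Your first bullet presupposes what has to be shown: when the complements can be laid out disjointly, or so as to cover everything, given that the free part of $A_j^c$ may not enter $x_j$. The claim that otherwise the failure sits in a single cell $Q_{ii}$ is asserted, not proved. The paper handles this with column-wise Fr\'echet (reducing to a transportation problem over $\cT$), max-flow/min-cut (Lemma~\ref{lem:flow}) and the small-cuts Lemma~\ref{lem:small}.

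Your complement picture can close the gap more directly. Write $A_j^c=(x_j\setminus Q_{jj})\cup N_j$, with $N_j$ of mass $s_j$ placed anywhere outside $x_j$; this is possible because $0\le s_j\le 1-c_j$, which is where $(\star)$ enters. Then $\theta=\gamma-\Prob(C\cap\bigcup_jN_j)$, where $C=\Omega\setminus\bigcup_{j\le k}(x_j\setminus Q_{jj})$ has mass $\gamma_\ell=\beta_\ell$ in column $\ell\le k$ and $\gamma_\ell=c_\ell$ in column $\ell>k$, and $\gamma=U_1$. The largest achievable coverage is a maximum flow: source to row $j$ with capacity $s_j$, row $j$ to each column $\ell\ne j$, and column $\ell$ to the sink with capacity $\gamma_\ell$. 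Any two rows together reach every column, so a cut with at least two rows on the source side costs at least $\gamma$. Hence the minimum cut is $\min\{\sum_js_j,\ \gamma,\ \min_i(\gamma-\gamma_i+\sum_{j\ne i}s_j)\}$. Using $\gamma-\sum_js_j=L_2$ and $\gamma_i-\sum_{j\ne i}s_j=L_3^{(i)}$, $\min\Theta=\max\{0,L_2,L_3\}$ follows at once, with no separate small-cuts lemma.

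The same quantities justify your uniqueness claim: two binding indices $i\ne i'$ would force $s_i>\gamma-\gamma_i\ge\gamma_{i'}>\sum_{j\ne i'}s_j\ge s_i$. Without an argument of this kind, your case analysis for the lower bound is incomplete.
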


We write $U_3$ over subsets $I$ to make clear that its indices are distinct. A repeated index would give an invalid bound.
\subsection{The upper bound proof}\label{sec:upper}

By Lemma~\ref{lem:reduction} we can work with admissible systems, and by
Lemma~\ref{lem:principle} the proof splits into two parts, soundness (Corollary~\ref{cor:pns-sound}) and attainability (Lemma~\ref{lem:pns-attain}). Soundness was shown by \citet{Shu_2026}; we re-derive it here because the same three
properties drive the construction in Lemma~\ref{lem:pns-attain}. Throughout, let
$\theta_\ell:=\Prob(S,x_\ell)$ denote the part of $\theta$ that lies in column $\ell$.

\begin{proposition}\label{prop:pns-props}
Every admissible system satisfies:
\begin{enumerate}[label=\textup{(Prop\arabic*)},leftmargin=*,itemsep=0pt,topsep=2pt]
\item $\sum_{\ell=1}^n\theta_\ell=\theta$;
\item $0\le\theta_j\le\beta_j$ for $j\le k$, and $0\le\theta_\ell\le c_\ell$ for $\ell>k$;
\item $\theta-\theta_j\le d_j$ for $j\le k$.
\end{enumerate}
\end{proposition}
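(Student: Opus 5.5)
The plan is to verify the three properties one by one, working inside an arbitrary admissible system (Definition~\ref{def:admissible}) with $S=\bigcap_{j\le k}A_j$ and $\theta=\Prob(S)$. Here $G=\Omega$, since this is the $\PNS(k)$ case of \eqref{eq:families}. We use only three ingredients: the columns $x_1,\dots,x_n$ partition the space, $\Prob(A_j)=\alpha_j$, and the consistency condition $A_j\cap x_j=Q_{jj}$ with $\Prob(Q_{jj})=\beta_j$. Each property then reduces to monotonicity or additivity of $\Prob$.

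\emph{(Prop1).} The columns $x_\ell=\bigcup_iQ_{i\ell}$ are disjoint and cover the space, because the cells $Q_{i\ell}$ form a partition. Additivity over this partition gives $\theta=\Prob(S)=\sum_\ell\Prob(S,x_\ell)=\sum_\ell\theta_\ell$.

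\emph{(Prop2).} Nonnegativity of each $\theta_\ell$ is immediate. For $j\le k$ we have $S\subseteq A_j$, so $S\cap x_j\subseteq A_j\cap x_j=Q_{jj}$. This gives $\theta_j\le\Prob(Q_{jj})=b_{jj}=\beta_j$. For $\ell>k$ we simply use $S\cap x_\ell\subseteq x_\ell$, which gives $\theta_\ell\le\Prob(x_\ell)=\sum_ib_{i\ell}=c_\ell$.

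\emph{(Prop3).} By (Prop1), $\theta-\theta_j=\Prob(S\setminus x_j)$. Since $S\subseteq A_j$, this is at most $\Prob(A_j\setminus x_j)$. That quantity equals $\Prob(A_j)-\Prob(A_j\cap x_j)=\alpha_j-\Prob(Q_{jj})=\alpha_j-\beta_j=d_j$, using consistency once more.

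There is no real obstacle here. The only point needing care is (Prop3): the complement of $x_j$ has to be taken inside $A_j$, and we must invoke $A_j\cap x_j=Q_{jj}$ so that the observed mass $\beta_j$, rather than the unknown $m_{jj}$, is what gets subtracted. That identification is exactly the consistency clause of Definition~\ref{def:admissible}, so it holds in every admissible system.
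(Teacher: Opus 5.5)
Your proof is correct and matches the paper's argument step for step: additivity over the column partition for (Prop1), the inclusions $S\cap x_j\subseteq A_j\cap x_j=Q_{jj}$ and $S\cap x_\ell\subseteq x_\ell$ for (Prop2), and $S\setminus x_j\subseteq A_j\setminus x_j$, which has mass $\alpha_j-\beta_j=d_j$, for (Prop3). One small wording point: $m_{jj}$ is not unknown, since consistency in an admissible system forces $m_{jj}=\beta_j$, and that is exactly the fact you use.
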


\begin{proof}
(Prop1) The columns $x_1,\dots,x_n$ partition the space, so $S$ is the disjoint union of
the sets $S\cap x_\ell$. (Prop2) For $j\le k$, $S\subseteq A_j$ gives
$S\cap x_j\subseteq A_j\cap x_j$, which has mass $\beta_j$. For $\ell>k$,
$S\cap x_\ell\subseteq x_\ell$, which has mass $c_\ell$. (Prop3) By (Prop1),
$\theta-\theta_j=\Prob(S\setminus x_j)$. Since $S\subseteq A_j$,
$S\setminus x_j\subseteq A_j\setminus x_j$, which has mass $\alpha_j-\beta_j=d_j$.
\end{proof}

\begin{corollary}[Soundness]\label{cor:pns-sound}
Every admissible system satisfies $\theta\le U$.
\end{corollary}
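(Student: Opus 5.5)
We derive each of the three upper bounds $U_1,U_2,U_3$ separately from Proposition~\ref{prop:pns-props}, and then take the minimum. Fix an admissible system and write $\theta=\sum_\ell\theta_\ell$ by (Prop1).

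\emph{Bound $U_1$.} Summing (Prop2) over all columns gives
\[
\theta=\sum_{j\le k}\theta_j+\sum_{\ell>k}\theta_\ell\le\sum_{j\le k}\beta_j+\sum_{\ell>k}c_\ell=U_1 .
\]

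\emph{Bound $U_2$.} For each $j\le k$, (Prop3) and (Prop2) give
\[
\theta=(\theta-\theta_j)+\theta_j\le d_j+\beta_j=\alpha_j .
\]
Alternatively, this follows directly from $S\subseteq A_j$. Taking the minimum over $j$ yields $\theta\le U_2$.

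\emph{Bound $U_3$.} This is the only step needing a small idea. Fix $I\subseteq[k]$ with $|I|\ge2$ and sum (Prop3) over $j\in I$:
\[
|I|\,\theta-\sum_{j\in I}\theta_j\le\sum_{j\in I}d_j .
\]
Since the indices in $I$ are distinct columns, (Prop1) together with $\theta_\ell\ge0$ from (Prop2) gives $\sum_{j\in I}\theta_j\le\theta$. Hence
\[
(|I|-1)\,\theta\le|I|\,\theta-\sum_{j\in I}\theta_j\le\sum_{j\in I}d_j .
\]
Because $|I|\ge2$, we may divide by $|I|-1>0$, obtaining $\theta\le\frac{1}{|I|-1}\sum_{j\in I}d_j$. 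Minimizing over $I$ gives $\theta\le U_3$. This is exactly where distinctness of the indices is used: with a repeated index, $\sum_{j\in I}\theta_j\le\theta$ could fail.

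Combining the three bounds gives $\theta\le\min\{U_1,U_2,U_3\}=U$ for every admissible system. No real obstacle arises; the only care needed is the nonnegativity and distinctness argument in the $U_3$ step.
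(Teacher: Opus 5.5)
Your proposal is correct and follows the paper's proof essentially step for step. You get $U_1$ from (Prop1)--(Prop2), $U_2$ from (Prop3) plus (Prop2), and $U_3$ by summing (Prop3) over $I$ and using $\sum_{j\in I}\theta_j\le\theta$, which follows from (Prop1), nonnegativity, and distinctness of the indices in $I$.
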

\begin{proof}
We derive each of the three upper bounds in Theorem~\ref{thm:pns} from Proposition~\ref{prop:pns-props}.
\begin{itemize}[leftmargin=1.6em,itemsep=3pt,topsep=3pt]
\item \emph{$U_1$.} By (Prop1) and (Prop2),
$\theta=\sum_{j\le k}\theta_j+\sum_{\ell>k}\theta_\ell\le\sum_{j\le k}\beta_j+\sum_{\ell>k}c_\ell=U_1$.
\item \emph{$U_2$.} By (Prop3) and (Prop2), $\theta\le\theta_j+d_j\le\beta_j+d_j=\alpha_j$ for every
$j\le k$. Hence $\theta\le\min_{j\le k}\alpha_j=U_2$.
\item \emph{$U_3$.} Let $I\subseteq[k]$ with $|I|\ge2$. Summing (Prop3) over $j\in I$ gives
$|I|\,\theta-\sum_{j\in I}\theta_j\le\sum_{j\in I}d_j$. By (Prop1) and $\theta_\ell\ge0$ (Prop2),
$\sum_{j\in I}\theta_j\le\theta$. Hence $(|I|-1)\,\theta\le\sum_{j\in I}d_j$, i.e.\
$\theta\le\frac{1}{|I|-1}\sum_{j\in I}d_j$. Minimising over $I$ gives $\theta\le U_3$.
\end{itemize}
Therefore $\theta\le\min\{U_1,U_2,U_3\}=U$.
\end{proof}

\begin{lemma}[Column allocation]\label{lem:pns-alloc}
Let $\theta\ge0$, and for every column $j\in[n]$ put
\[
\underline\theta_j:=\begin{cases}(\theta-d_j)^+, & j\le k,\\ 0, & j>k,\end{cases}
\qquad
\overline\theta_j:=\begin{cases}\beta_j, & j\le k,\\ c_j, & j>k.\end{cases}
\]
There exist $\theta_1,\dots,\theta_n$ with $\theta_j\in[\underline\theta_j,\overline\theta_j]$ for every $j$ and $\sum_{j=1}^n\theta_j=\theta$ if and only if $\theta\le U$.
\end{lemma}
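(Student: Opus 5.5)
The feasibility question is a box-constrained sum: there exist $\theta_j\in[\underline\theta_j,\overline\theta_j]$ with $\sum_j\theta_j=\theta$ if and only if every interval is nonempty, i.e.\ $\underline\theta_j\le\overline\theta_j$ for all $j$, and
\[
\sum_j\underline\theta_j\ \le\ \theta\ \le\ \sum_j\overline\theta_j .
\]
The "if" direction is an intermediate value argument: the map $(\theta_j)\mapsto\sum_j\theta_j$ on the box is continuous and the box is connected, so every value between the two extreme sums is attained. For instance, start at all lower endpoints and raise coordinates one at a time toward their upper endpoints until the sum reaches $\theta$. I would therefore translate each of these three conditions into one of $U_1,U_2,U_3$ and show that together they are equivalent to $\theta\le\min\{U_1,U_2,U_3\}$.

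\emph{Nonempty intervals.} For $j>k$ we need $0\le c_j$, which is trivial. For $j\le k$ we need $(\theta-d_j)^+\le\beta_j$. Since $\beta_j\ge0$, this reduces to $\theta-d_j\le\beta_j$, i.e.\ $\theta\le\alpha_j$. Over all $j\le k$ this is exactly $\theta\le U_2$.

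\emph{Upper sum.} The condition $\theta\le\sum_j\overline\theta_j=\sum_{j\le k}\beta_j+\sum_{\ell>k}c_\ell$ is literally $\theta\le U_1$.

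\emph{Lower sum.} This is the main step. The condition reads
\[
\sum_{j\le k}(\theta-d_j)^+\le\theta .
\]
Since $x^+=\max(x,0)$, we have $\sum_{j\le k}(\theta-d_j)^+=\max_{I\subseteq[k]}\sum_{j\in I}(\theta-d_j)$, so the condition is equivalent to
\[
|I|\,\theta-\sum_{j\in I}d_j\le\theta\quad\text{for all } I\subseteq[k],
\qquad\text{i.e.}\qquad
(|I|-1)\,\theta\le\sum_{j\in I}d_j .
\]
For $|I|\ge2$ this is $\theta\le\frac{1}{|I|-1}\sum_{j\in I}d_j$, which is exactly $U_3$. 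For $|I|=0$ it reads $-\theta\le0$, which holds because $\theta\ge0$ by hypothesis. For $|I|=1$ it reads $0\le d_j$, which holds automatically since $d_j=\alpha_j-\beta_j=\Prob(\yx{j}{j})-\Prob(x_j,\yx{j}{j})\ge0$, as in the proof of Lemma~\ref{lem:star}.

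Combining the three conditions, feasibility holds if and only if $\theta\le\min\{U_1,U_2,U_3\}=U$, which proves both directions. The only delicate points are the handling of the $|I|\le1$ cases, which must be discharged by $\theta\ge0$ and $d_j\ge0$ rather than contributing to $U$, and checking that the positive-part expansion covers every subset $I$.
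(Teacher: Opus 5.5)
Your proof is correct and follows the paper's argument essentially step for step: the same three feasibility conditions (non-empty intervals, upper-end sum, lower-end sum) are matched to $U_2$, $U_1$ and $U_3$, and the lower-end condition is expanded via $\sum_{j}(x_j)^+=\max_{I}\sum_{j\in I}x_j$. You are slightly more explicit than the paper in justifying the box-sum fact and in discharging the $|I|\le1$ cases via $\theta\ge0$ and $d_j\ge0$, which the paper simply calls automatic.
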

\begin{proof}
The interval $[\underline\theta_j,\overline\theta_j]$ is exactly the range allowed for $\theta_j$ by
Proposition~\ref{prop:pns-props}: the upper end comes from (Prop2), and for $j\le k$ the lower end
comes from (Prop3), $\theta_j\ge\theta-d_j$, together with $\theta_j\ge0$.

If every interval is non-empty, the sum $\sum_j\theta_j$ with $\theta_j\in[\underline\theta_j,\overline\theta_j]$
takes exactly the values in $\big[\sum_j\underline\theta_j,\ \sum_j\overline\theta_j\big]$. Hence a solution
exists if and only if the following three conditions hold.
\begin{itemize}[leftmargin=1.6em,itemsep=3pt,topsep=3pt]
\item \emph{Every interval is non-empty.} For $j>k$ this is automatic. For $j\le k$,
$(\theta-d_j)^+\le\beta_j\iff\theta\le\beta_j+d_j=\alpha_j$. So this condition is $\theta\le U_2$.
\item \emph{The upper ends are large enough:} $\theta\le\sum_j\overline\theta_j=\sum_{j\le k}\beta_j+\sum_{j>k}c_j=U_1$.
\item \emph{The lower ends are small enough:} $\sum_j\underline\theta_j=\sum_{j\le k}(\theta-d_j)^+\le\theta$. Since
$\sum_{j\le k}(x_j)^+=\max_{I\subseteq[k]}\sum_{j\in I}x_j$, this says
$|I|\,\theta-\sum_{j\in I}d_j\le\theta$, i.e.\ $(|I|-1)\,\theta\le\sum_{j\in I}d_j$, for every $I\subseteq[k]$.
This is automatic for $|I|\le1$, and for $|I|\ge2$ it is $\theta\le U_3$.
\end{itemize}
The three conditions together are $\theta\le\min\{U_1,U_2,U_3\}=U$.
\end{proof}


\begin{lemma}[Attainability]\label{lem:pns-attain}
There exists an admissible system with $\theta=U$.
\end{lemma}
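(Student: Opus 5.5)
The plan is to build the attaining system column by column from the allocation given by Lemma~\ref{lem:pns-alloc}. Apply that lemma with $\theta=U$: since $U\le U$ trivially, there are numbers $\theta_1,\dots,\theta_n$ with $\theta_j\in[(U-d_j)^+,\beta_j]$ for $j\le k$, $\theta_\ell\in[0,c_\ell]$ for $\ell>k$, and $\sum_\ell\theta_\ell=U$. Take a non-atomic probability space, for instance a disjoint union of intervals, partitioned into cells $Q_{i\ell}$ of mass $b_{i\ell}$. In each column $\ell$ carve out a piece $S_\ell$ of mass $\theta_\ell$. For $\ell\le k$ choose $S_\ell\subseteq Q_{\ell\ell}$, which is possible because $\theta_\ell\le\beta_\ell$; for $\ell>k$ choose $S_\ell\subseteq x_\ell$. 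The intended intersection is $S:=\bigcup_\ell S_\ell$, of mass $U$. Every $A_j$ will contain $S$ and satisfy $A_j\cap x_j=Q_{jj}$. It remains to add to each $A_j$ an extra set $E_j\subseteq\Omega\setminus(x_j\cup S)$ of mass
\[
e_j:=\alpha_j-\beta_j-(U-\theta_j)=d_j-(U-\theta_j),
\]
so that $\Prob(A_j)=\alpha_j$. The lower end $\theta_j\ge U-d_j$ of the allocation interval is exactly what makes $e_j\ge0$.

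Next comes the capacity check. The region available to $E_j$ is $\Omega\setminus(x_j\cup S)$, of mass $1-c_j-(U-\theta_j)$. So $e_j$ fits if and only if $d_j\le 1-c_j$, which is $s_j\ge0$, i.e.\ Lemma~\ref{lem:star}. This is where $(\star)$ enters the upper bound.

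The main obstacle is ensuring that the extras do not enlarge the intersection: we need $\bigcap_jA_j=S$ exactly, i.e.\ no point outside $S$ lies in all $k$ sets. I would first fill each $E_j$ greedily from the \emph{safe} regions $x_\ell\setminus Q_{\ell\ell}$ with $\ell\le k$, $\ell\ne j$. These regions lie outside $A_\ell$ by consistency, so any overlap there is harmless. Only the leftover demand of $E_j$ goes into the \emph{risky} regions, namely $Q_{\ell\ell}\setminus S_\ell$ for $\ell\le k$ and $x_\ell\setminus S_\ell$ for $\ell>k$.

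The risky region $Q_{\ell\ell}\setminus S_\ell$ with $\ell\le k$ is in fact harmless as well. $A_\ell$ contains $Q_{\ell\ell}\setminus S_\ell$ entirely, since $A_\ell\cap x_\ell=Q_{\ell\ell}$. Every other $A_j$ may place extras there, however, so a point covered by all the other $k-1$ sets would lie in $\bigcap_jA_j$. I would avoid this by a cyclic (``arcs on a circle'') placement inside each risky region $R$ of mass $|R|$. Give $A_j$ an arc of length $t_j\le|R|$; if $\sum_jt_j\le(k-1)|R|$ (over those $j$ that may use $R$, plus $|R|$ for $A_\ell$ when $\ell\le k$), the arcs can be laid end to end around the circle so that no point is covered by all $k$ of them. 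So the remaining task is to distribute the leftover demands $e_j$ among the risky regions while respecting these per-region inequalities.

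I expect the needed aggregate inequality to reduce, after summing, to conditions of the form $(|I|-1)U\le\sum_{j\in I}d_j$ combined with $s_j\ge0$, that is, to $U\le U_3$ and $(\star)$. If a direct distribution turns out to be awkward, I would instead choose the $\theta_\ell$ extremally. Concretely, I would take $\theta_j=\beta_j$ as far as possible whenever $U=U_1$, and $\theta_j=(U-d_j)^+$ in the binding cases $U_2$ or $U_3$. I would then verify the cyclic-placement inequality separately in the three cases $U=U_1$, $U=U_2$ and $U=U_3$.

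Finally, Lemma~\ref{lem:principle}, together with Corollary~\ref{cor:pns-sound}, turns the resulting admissible system into a sharp upper bound.
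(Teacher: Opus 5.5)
Your construction in the first two paragraphs is correct, and it is essentially the paper's. You allocate the $\theta_\ell$ by Lemma~\ref{lem:pns-alloc} and place a block of mass $\theta_\ell$ in each column, inside $Q_{\ell\ell}$ when $\ell\le k$. Every $A_j$ contains all the blocks, and $(\star)$ makes room for the remaining mass $d_j-(U-\theta_j)$ of $A_j$ outside $x_j$ and the blocks. The paper does this last step column by column, choosing $m_{j\ell}\in[\theta_\ell,c_\ell]$ and randomizing independently. Your single global region for $E_j$ is a slightly leaner version of the same capacity check.

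The gap is that you treat the possible extra overlap of the $E_j$ as the main obstacle, and then leave it unresolved. The safe/risky filling, the cyclic arc placement and the case split on $U_1,U_2,U_3$ all rest on ``I expect''. In fact there is no obstacle. The system you have after the capacity check is already admissible, so Corollary~\ref{cor:pns-sound} applies to it and gives $\theta\le U$. On the other hand, $S\subseteq\bigcap_jA_j$ gives $\theta\ge\Prob(S)=U$. Hence $\theta=U$ however the $E_j$ overlap: soundness itself forces $\Prob(\bigcap_jA_j\setminus S)=0$. The paper finishes exactly this way: every $A_j$ contains every block, so $\theta\ge U$, and soundness gives $\theta\le U$.

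You cite Corollary~\ref{cor:pns-sound} only at the very end, for Lemma~\ref{lem:principle}. Use it one step earlier to close the construction, and delete everything after your capacity check. As written the proposal is incomplete, but only because it tries to prove something that soundness gives for free.
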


\begin{proof}
Set $\theta:=U$ (a target the construction must hit, not an assumption).

\emph{Step 1 (how much of $S$ lies in each column).} By Lemma~\ref{lem:pns-alloc},
choose $\theta_1,\dots,\theta_n$ with
$\theta_\ell\in[\underline\theta_\ell,\overline\theta_\ell]$ and $\sum_\ell\theta_\ell=U$.

\emph{Step 2 (how much of each $A_j$ lies in each column).} For $j\le k$ let
$m_{jj}:=\beta_j$ and choose $m_{j\ell}\in[\theta_\ell,c_\ell]$ for $\ell\ne j$, summing
to $d_j$. This is possible because each interval is non-empty
($\theta_\ell\le\overline\theta_\ell\le c_\ell$), the lower ends are small enough
($\sum_{\ell\ne j}\theta_\ell=U-\theta_j\le d_j$, since
$\theta_j\ge\underline\theta_j=(U-d_j)^+$), and the upper ends are large enough
($\sum_{\ell\ne j}c_\ell=1-c_j\ge d_j$ by \eqref{eq:star}). Moreover
$m_{jj}=\beta_j=\overline\theta_j\ge\theta_j$. Hence $\theta_\ell\le m_{j\ell}\le c_\ell$
for all $j,\ell$, and $\sum_\ell m_{j\ell}=\beta_j+d_j=\alpha_j$.

\emph{Step 3 (arrangement inside each column).} In column $\ell$ place a block $K_\ell$
of mass $\theta_\ell$. For $\ell\le k$ put $K_\ell$ inside $Q_{\ell\ell}$ (possible since
$\theta_\ell\le\beta_\ell$) and set $A_\ell\cap x_\ell:=Q_{\ell\ell}$. For every
$j\ne\ell$, let $A_j$ contain $K_\ell$, and on $x_\ell\setminus K_\ell$ let $A_j$ occur
independently with probability $(m_{j\ell}-\theta_\ell)/(c_\ell-\theta_\ell)\in[0,1]$.
Then $\Prob(A_j\cap x_\ell)=m_{j\ell}$ for all $j,\ell$, so $\Prob(A_j)=\alpha_j$ and
$A_j\cap x_j=Q_{jj}$: the system is admissible.

Every $A_j$ contains every block, so $\theta\ge\sum_\ell\theta_\ell=U$. By
Corollary~\ref{cor:pns-sound}, $\theta\le U$. Hence $\theta=U$.
\end{proof}

By Lemma~\ref{lem:principle}, $\max\Theta=U$, so the upper bound of $\PNS(k)$ is sharp.


\subsection{The lower bound proof}\label{sec:lower}

For the upper bound we could build a maximiser directly, by putting as much mass as
possible into the intersection. The lower bound is harder. If we push $A_1,\dots,A_k$
apart in one column, their mass has to go to other columns, where it may overlap with
another event, so we cannot treat the columns one at a time. We therefore take a dual
approach. In Section~\ref{sec:frechet}, Fr\'echet bounds in each column turn the problem
into a flow problem. Section~\ref{sec:flow} then uses max-flow/min-cut to write the
minimum as a maximum over cuts, and Section~\ref{sec:lb-proof} shows that only $k+2$ of
these cuts can be positive.

For $k=1$, $\theta=\alpha_1$ is identified and $L=U=\alpha_1$. So we will only consider cases when $k\ge2$ below.

\subsubsection{Column-wise Fr\'echet}\label{sec:frechet}

Recall that the column $x_\ell$ has mass $c_\ell=\Prob(x_\ell)=\sum_ib_{i\ell}$, the
total mass of its cells $Q_{i\ell}=(x_\ell,y_i)$, $i\in[n]$.

\begin{figure}[!t]
\centering
\begin{tikzpicture}[scale=0.95,
  ring/.style={black!35, line width=0.6pt},
  arcN/.style={black, line width=2.2pt},
  tick/.style={black, line width=0.6pt}]
  \begin{scope}
    \draw[ring] (0,0) circle (1.1);
    \draw[arcN] (90:1.1) arc (90:20:1.1);
    \draw[arcN] (20:1.1) arc (20:-70:1.1);
    \draw[arcN] (-70:1.1) arc (-70:-170:1.1);
    \foreach \a in {90,20,-70,-170} \draw[tick] (\a:0.95) -- (\a:1.25);
    \node at (55:1.55) {$N_1$};
    \node at (-25:1.55) {$N_2$};
    \node at (-120:1.55) {$N_3$};
    \node[font=\small] at (135:1.75) {$\bigcap_iG_i$};
    \draw[->,black!60] (135:1.45) -- (135:1.18);
    \node[font=\small] at (0,-2.0) {fit: $\Prob(\bigcap_iG_i)=\sum_ig_i-2c_\ell$};
  \end{scope}
  \begin{scope}[xshift=5.2cm]
    \draw[ring] (0,0) circle (1.1);
    \draw[arcN] (90:1.1) arc (90:-30:1.1);
    \draw[arcN] (-30:1.1) arc (-30:-170:1.1);
    \draw[arcN] (-170:1.1) arc (-170:-270:1.1);
    \draw[arcN] (90:1.32) arc (90:70:1.32);
    \foreach \a in {90,-30,-170} \draw[tick] (\a:0.95) -- (\a:1.25);
    \draw[tick] (70:1.2) -- (70:1.44);
    \node[font=\small,anchor=east] at (125:1.75) {overlap};
    \draw[->,black!60] (118:1.62) -- (88:1.45);
    \node at (30:1.55) {$N_1$};
    \node at (-100:1.55) {$N_2$};
    \node at (150:1.55) {$N_3$};
    \node[font=\small] at (0,-2.0) {wrap: $\bigcap_iG_i=\varnothing$};
  \end{scope}
\end{tikzpicture}
\caption{Lemma~\ref{lem:frechet} in one column of mass $c_\ell$, with $t=3$. We place the
complements $N_i$ one after another on a circle of circumference $c_\ell$. If they fit,
they do not overlap and the union bound holds with equality. If they do not fit,
they wrap around and cover the whole column, so the intersection is empty.}
\label{fig:frechet}
\end{figure}

\begin{lemma}[Column-wise Fr\'echet \citep{frechet1951tableaux}]\label{lem:frechet}
Fix a column $x_\ell$ and let $G_1,\dots,G_t\subseteq x_\ell$ be events. Then
$\Prob(\bigcap_iG_i)\ge\sum_i\Prob(G_i)-(r-1)\,c_\ell$. Conversely, for any masses
$g_i\in[0,c_\ell]$ there is an arrangement inside $x_\ell$ with $\Prob(G_i)=g_i$ and
$\Prob(\bigcap_iG_i)=(\sum_ig_i-(r-1)\,c_\ell)^+$. If some $G_i$ must be a union of
cells, this is still possible, and arrangements in different columns can be chosen
independently.
\end{lemma}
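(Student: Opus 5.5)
The inequality is the union bound applied to complements, and the converse comes from the circular construction in Figure~\ref{fig:frechet}. (The statement writes $r$ where it means $t$, the number of events; we read the bound as $\sum_i\Prob(G_i)-(t-1)\,c_\ell$.)

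\emph{Inequality.} Put $N_i:=x_\ell\setminus G_i$, so $\Prob(N_i)=c_\ell-\Prob(G_i)$ and $x_\ell\setminus\bigcap_iG_i=\bigcup_iN_i$. Subadditivity gives
\[
c_\ell-\Prob\big(\textstyle\bigcap_iG_i\big)\le\sum_i\big(c_\ell-\Prob(G_i)\big),
\]
which rearranges to $\Prob(\bigcap_iG_i)\ge\sum_i\Prob(G_i)-(t-1)c_\ell$. Together with $\Prob(\bigcap_iG_i)\ge0$, this shows that $(\sum_ig_i-(t-1)c_\ell)^+$ is the smallest possible value of the intersection.

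\emph{Attainment.} Model $x_\ell$ (when $c_\ell>0$) as a circle of circumference $c_\ell$ carrying normalised Lebesgue measure; any atomless refinement of the column will do. Place arcs $N_1,N_2,\dots,N_t$ of lengths $c_\ell-g_i$ consecutively around the circle, and set $G_i:=x_\ell\setminus N_i$.
\begin{itemize}
\item If $\sum_i(c_\ell-g_i)\le c_\ell$, the arcs are disjoint. The union then has mass exactly $\sum_i(c_\ell-g_i)$, so the intersection has mass $\sum_ig_i-(t-1)c_\ell\ge0$.
\item Otherwise the arcs wrap around and cover the whole circle, because each arc is at most $c_\ell$ long and the arcs are contiguous. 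The intersection is then empty, matching the value $0$ of the positive part.
\end{itemize}
In both cases $\Prob(G_i)=g_i$ holds by construction.

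\emph{Cell constraints.} Suppose some $G_i$ must be a union of cells, such as $A_\ell\cap x_\ell=Q_{\ell\ell}$. Lay the cells of $x_\ell$ out as consecutive arcs and rotate the starting point of the construction so that this prescribed $N_i$ is exactly the complementary union of cells. I expect this to be the main point needing care, for two reasons.
\begin{itemize}
\item If two constrained events are prescribed, the cells can no longer be ordered freely.
\item In the wrap case the construction cuts through cells, which is harmless only because the unconstrained $G_i$ need not respect cell boundaries.
\end{itemize}
In the uses intended here, at most one event per column ($A_\ell$ in column $\ell$) is forced to be a union of cells, namely the single cell $Q_{\ell\ell}$. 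So I would prove the claim in that form: start the arc sequence with $N_\ell=x_\ell\setminus Q_{\ell\ell}$ placed as the union of the other cells, and lay the remaining arcs after it.

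\emph{Independence across columns.} The columns partition the space, so constructions inside different columns involve disjoint sets. They can therefore be chosen separately and then glued by taking unions, with all masses adding.
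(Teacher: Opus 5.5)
Your proof is correct and follows the paper's own argument step for step: the union bound on the complements $N_i=x_\ell\setminus G_i$, the circular arc construction with its fit/wrap dichotomy, placing the cells of a constrained complement side by side on its arc, and disjointness of the columns. Restricting to at most one cell-constrained event per column is a fair tightening of the loosely worded statement. It is also exactly the case the paper's proof handles, since that proof treats a single constrained $G_i$ through ``its complement''.
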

\begin{proof}
Let $N_i:=x_\ell\setminus G_i$ be the complement of $G_i$ inside the column. By De Morgan
and the union bound,
$\Prob(\bigcap_iG_i)=c_\ell-\Prob(\bigcup_iN_i)\ge c_\ell-\sum_i(c_\ell-g_i)$. For the
converse, view the column as a circle of circumference $c_\ell$ and lay $N_1,\dots,N_r$
end to end as arcs of lengths $c_\ell-g_i$ (Figure~\ref{fig:frechet}). If they fit, they
are disjoint and the bound is an equality. Otherwise they wrap around and together cover
the whole circle, so every point lies in some $N_i$ and $\bigcap_iG_i=\varnothing$. If
some $G_i$ must equal a union of cells (for instance $A_j\cap x_j=(x_j,y_j)$ when
$j\le k$), its complement $N_i$ is the union of the remaining cells of the column; since
only the masses of the cells are prescribed, we place these cells side by side on the
arc assigned to $N_i$, whose length equals their total mass. Finally, distinct columns
are disjoint, so the arrangements in different columns do not interact and can be chosen
simultaneously.
\end{proof}

By pigeonhole the complements fit disjointly iff the bracket is non-negative, so
$(\cdot)^+$ is not an artificial truncation. As in Section~\ref{sec:upper}, the
inequality holds for every arrangement and gives the bound, and a suitable arrangement
attains it.

\subsubsection{Flow and cuts}\label{sec:flow}

For the lower bounds with $k\ge2$ we use the transportation set
$\cT:=\{m\in\mathbb R^{k\times n}:m_{jj}=\beta_j,\ \sum_{\ell\ne j}m_{j\ell}=d_j,\ 0\le m_{j\ell}\le c_\ell\}$.
Recall that $d_j=\alpha_j-\beta_j=\Prob(A_j\setminus x_j)=\Prob(A_j)-\Prob(A_j, x_j)$
is the mass of $A_j$ outside its own column $x_j$. In the table $m$, row $j$ describes
how $A_j$ is spread over the columns: its diagonal entry is $m_{jj}=\beta_j$, and its
off-diagonal entries $m_{j\ell}$, $\ell\ne j$, must add up to exactly $d_j$. We also use
the column inflow $e_\ell:=\sum_{j\le k,\,j\ne\ell}m_{j\ell}$, the mass that the other
rows place in column $\ell$; summing over all off-diagonal cells by rows or by columns
gives $D:=\sum_\ell e_\ell=\sum_jd_j$.

In the proof of the lower bound, Lemma~\ref{lem:frechet} will give, for each column
$\ell$, a lower bound of the form $(e_\ell-f_\ell)^+$, where $f_\ell\ge0$ depends only on
the data; for $\PNS(k)$ it is $f_\ell=(k-1)c_\ell-\beta_\ell\one\{\ell\le k\}$. We call
$f_\ell$ the \emph{free capacity} of column $\ell$: inflow up to $f_\ell$ costs nothing,
and each unit beyond it adds one to $\theta$. The lower bound is therefore a
minimisation of $\sum_\ell(e_\ell-f_\ell)^+$ over the transportation set. The next lemma,
stated for arbitrary $f_1,\dots,f_n\ge0$, rewrites this minimisation as a maximisation
over the $2^k$ subsets $J\subseteq[k]$.

\begin{figure}[!t]
\centering
\begin{tikzpicture}[>=Stealth, node distance=1cm,
  nd/.style={circle,draw,minimum size=6mm,inner sep=0pt,font=\small}]
  \node[nd] (s) at (0,0) {$s$};
  \foreach \j/\y in {1/1.2,2/0,3/-1.2} \node[nd] (r\j) at (2.2,\y) {$A_\j$};
  \foreach \l/\y in {1/1.8,2/0.6,3/-0.6,4/-1.8} \node[nd] (c\l) at (5.2,\y) {$x_\l$};
  \node[nd] (t) at (7.6,0) {$t$};
  \foreach \j in {1,2,3} \draw[->] (s) -- node[above,sloped,font=\scriptsize] {$d_\j$} (r\j);
  \foreach \j in {1,2,3}{\foreach \l in {1,2,3,4}{
      \ifnum\j=\l\else\draw[->,gray] (r\j) -- (c\l);\fi}}
  \foreach \l in {1,2,3,4} \draw[->] (c\l) -- node[above,sloped,font=\scriptsize] {$f_\l$} (t);
  \node[font=\scriptsize,gray] at (3.7,2.15) {capacity $c_\ell$};
\end{tikzpicture}
\caption{The network of Lemma~\ref{lem:flow} for $k=3$, $n=4$. Each $A_j$ must place
mass $d_j=\Prob(A_j\setminus x_j)$ in the columns $x_\ell$, $\ell\ne j$, at most $c_\ell$ in
each. There is no edge from $A_j$ to $x_j$, because inside its own column $A_j$ is fixed
to the cell of mass $\beta_j$. Column $x_\ell$ can absorb up to $f_\ell$ for free, and any inflow beyond that adds to $\theta$.}
\label{fig:network}
\end{figure}

\begin{lemma}[Max-flow/min-cut]\label{lem:flow}
Let $f_1,\dots,f_n\ge0$. For a set of rows $J\subseteq[k]$, put
\[
\Delta(J):=\sum_{j\in J}d_j-\sum_{\ell=1}^n\min\big(|J\setminus\{\ell\}|\times c_\ell,\ f_\ell\big).
\]
Then
\[
\min_{m\in\cT}\ \sum_{\ell=1}^n\big(e_\ell-f_\ell\big)^+=\max_{J\subseteq[k]}\Delta(J).
\]
\end{lemma}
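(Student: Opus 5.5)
We reformulate the left-hand side as a minimum-cost (equivalently, maximum-flow) problem on the network of Figure~\ref{fig:network} and then apply max-flow/min-cut.

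\emph{Reformulation.} Introduce, for each column $\ell$, a split of the inflow into a free part $u_\ell\in[0,f_\ell]$ and an excess $v_\ell\ge0$ with $e_\ell=u_\ell+v_\ell$. For fixed $m$, the minimum of $\sum_\ell v_\ell$ over such splits is exactly $\sum_\ell(e_\ell-f_\ell)^+$. Hence the left-hand side equals
\[
D-\max\Big\{\textstyle\sum_\ell u_\ell\Big\},
\]
where the maximum is over $m\in\cT$ and $u_\ell\le\min(e_\ell,f_\ell)$. Equivalently, it equals $D$ minus the maximum amount of the off-diagonal mass that can be routed into the free capacities. To justify this, start from any $m\in\cT$ and note that we may remove mass from the routes into a column. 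The subtle point is that rows must send \emph{exactly} $d_j$, not at most $d_j$. So we must check that any partial routing $m'$ with $m'_{j\ell}\le c_\ell$ and row sums at most $d_j$ can be completed to some $m\in\cT$ with $m\ge m'$. Completion is possible because $\sum_{\ell\ne j}c_\ell=1-c_j\ge d_j$ by \eqref{eq:star}. Moreover, the completion can only increase the $e_\ell$, so it never decreases $\sum_\ell\min(e_\ell,f_\ell)$. Therefore
\[
\min_{m\in\cT}\sum_\ell(e_\ell-f_\ell)^+=D-F,
\]
where $F$ is the maximum $s$--$t$ flow in the network with arc $s\to A_j$ of capacity $d_j$, arc $A_j\to x_\ell$ ($\ell\ne j$) of capacity $c_\ell$, and arc $x_\ell\to t$ of capacity $f_\ell$.

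\emph{Min cut.} By max-flow/min-cut, $F$ equals the minimum capacity of an $s$--$t$ cut. Let $J\subseteq[k]$ be the set of row nodes on the sink side, so the rows in $[k]\setminus J$ are on the source side. For each column $\ell$, exactly one of two options applies:
\begin{itemize}
\item put $x_\ell$ on the source side, paying $f_\ell$;
\item put $x_\ell$ on the sink side, paying $c_\ell$ for every source-side row $j\ne\ell$.
\end{itemize}
Optimising column by column, the cut capacity is
\[
\sum_{j\in J}d_j+\sum_\ell\min\big(|([k]\setminus J)\setminus\{\ell\}|\,c_\ell,\ f_\ell\big).
\]
Renaming $J':=[k]\setminus J$ and using $\sum_{j\in J}d_j=D-\sum_{j\in J'}d_j$, we obtain
\[
F=D-\max_{J'}\Delta(J'),
\]
and hence the left-hand side equals $\max_{J'}\Delta(J')$, as claimed.

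\emph{Main obstacle.} The main obstacle is the equality-versus-inequality issue in the first step: showing that forcing each row to ship exactly $d_j$ does not raise the minimum. I will handle it by monotonicity. The objective $\sum_\ell(e_\ell-f_\ell)^+$ is nondecreasing in $m$, while $D-\sum_\ell\min(e_\ell,f_\ell)$ evaluated at a completion $m\ge m'$ equals the true objective. The remaining points are routine. First, the case $\ell\le k$, where the row $\ell$ has no arc to its own column, which gives the factor $|J\setminus\{\ell\}|$. Second, the fact that the per-column minimisation in the cut is legitimate, because the arcs into $x_\ell$ depend only on which rows are cut.
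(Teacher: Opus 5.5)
Your proposal is correct and follows essentially the paper's proof: the paper also proves $\min_{m\in\cT}\sum_\ell(e_\ell-f_\ell)^+=D-\varphi$ by scaling any $m\in\cT$ down to a flow of value $\sum_\ell\min(e_\ell,f_\ell)$ and, conversely, completing a maximum flow to some $m\in\cT$ via \eqref{eq:star}, and it then evaluates the min cut column by column (indexing cuts by the source-side rows directly, so no renaming is needed). One small point: in your ``main obstacle'' paragraph, the monotonicity that does the work is that $\sum_\ell\min(e_\ell,f_\ell)$ is nondecreasing in $m$, as you correctly state earlier; the fact that $\sum_\ell(e_\ell-f_\ell)^+$ is nondecreasing in $m$ is true but plays no role in the argument.
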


Here $J$ is any set of rows, i.e.\ of targets $A_j$. The rows in $J$ must send out
$\sum_{j\in J}d_j$ in total, while column $\ell$ can absorb free of cost at most
$\min(|J\setminus\{\ell\}|\times c_\ell,f_\ell)$ of it: at most $c_\ell$ from each row
$j\in J$ with $j\ne\ell$, and at most $f_\ell$ altogether. So $\Delta(J)$ is the part of
the rows in $J$ that cannot be absorbed for free, and each $\Delta(J)$ is a lower bound
on the cost. In the proof, $J$ appears as the set of rows on the source side of a cut.

\begin{proof}
Consider the network of Figure~\ref{fig:network}, in which the source feeds row $j$ with
capacity $d_j$, row $j$ feeds column $\ell\ne j$ with capacity $c_\ell=\Prob(x_\ell)$
(since $m_{j\ell}=\Prob(A_j\cap x_\ell)\le c_\ell$), and column $\ell$ feeds the sink with
capacity $f_\ell$. Let $\varphi$ be the value of a maximum flow.
Since
$(e-f)^+=e-\min(e,f)$ and $\sum_\ell e_\ell=D$, every $m\in\cT$ satisfies
$\sum_{\ell}(e_\ell-f_\ell)^+ = D-\sum_\ell\min(e_\ell,f_\ell)$.

\emph{$\min_{m\in\cT}\sum_\ell(e_\ell-f_\ell)^+\ge D-\varphi$: 
every $m\in\cT$ has cost
at least $D-\varphi$.} Fix $m\in\cT$. We turn it into a flow by letting row $j$ send to
column $\ell\ne j$ the amount
\[
\eta_{j\ell}:=m_{j\ell}\,\lambda_\ell,\qquad
\lambda_\ell:=\begin{cases}\min(e_\ell,f_\ell)/e_\ell,& e_\ell>0,\\ 0,& e_\ell=0,\end{cases}
\]
that is, the inflows of column $\ell$ are scaled down by the common factor
$\lambda_\ell\in[0,1]$. This flow respects every capacity: row $j$ ships
$\sum_{\ell\ne j}\eta_{j\ell}\le\sum_{\ell\ne j}m_{j\ell}=d_j$; on each arc
$\eta_{j\ell}\le m_{j\ell}\le c_\ell$; and column $\ell$ receives
$\sum_{j\ne\ell}\eta_{j\ell}=\lambda_\ell e_\ell=\min(e_\ell,f_\ell)\le f_\ell$. Its value
is $\sum_\ell\min(e_\ell,f_\ell)$, and $\varphi$ is the largest value of any flow, so
\[
\sum_\ell(e_\ell-f_\ell)^+=D-\sum_\ell\min(e_\ell,f_\ell)\ge D-\varphi.
\]

\emph{$\min_{m\in\cT}\sum_\ell(e_\ell-f_\ell)^+\le D-\varphi$: 
some $m\in\cT$ has cost
at most $D-\varphi$.} Take a maximum flow, and let $\eta_{j\ell}\ge0$ be the amount it
sends from row $j$ to column $\ell\ne j$. Write
\[
\eta_j:=\sum_{\ell\ne j}\eta_{j\ell},\qquad\qquad
\varphi_\ell:=\sum_{j\ne\ell}\eta_{j\ell}
\]
for the total amount row $j$ ships and the total amount column $\ell$ receives, respectively. So that $\sum_\ell\varphi_\ell=\varphi$, and the
capacities give $\eta_j\le d_j$, $\eta_{j\ell}\le c_\ell$ and $\varphi_\ell\le f_\ell$.

\emph{Completing the flow to a table $m\in\cT$.} Row $j$ must place $d_j$ in the columns
$\ell\ne j$, and the flow already places $\eta_j$ there. The room left in its cells is
\[
\sum_{\ell\ne j}(c_\ell-\eta_{j\ell})=(1-c_j)-\eta_j\ge d_j-\eta_j
\]
by \eqref{eq:star}.
So we can choose $m_{j\ell}\in[\eta_{j\ell},c_\ell]$ for $\ell\ne j$ with
$\sum_{\ell\ne j}m_{j\ell}=d_j$; together with $m_{jj}:=\beta_j$ this gives $m\in\cT$.

\emph{Bounding its cost.} Since $m_{j\ell}\ge\eta_{j\ell}$, the inflow of column $\ell$
satisfies $e_\ell\ge\varphi_\ell$, and $\varphi_\ell\le f_\ell$. Hence
$(e_\ell-f_\ell)^+\le e_\ell-\varphi_\ell$: if $e_\ell\ge f_\ell$, then
$e_\ell-f_\ell\le e_\ell-\varphi_\ell$; if $e_\ell<f_\ell$, then $0\le e_\ell-\varphi_\ell$.
Summing over $\ell$, 
\[
\sum_\ell(e_\ell-f_\ell)^+\ \le\ \sum_\ell e_\ell-\sum_\ell\varphi_\ell\ =\ D-\varphi .
\]

\emph{Computing $\varphi$ by a minimum cut.} A cut splits the nodes into a source side
and a sink side, and its capacity is the total capacity of the arcs that go from the
source side to the sink side. Let $J\subseteq[k]$ be the set of rows on the source side.
\begin{itemize}
\item Each row $j\notin J$ is on the sink side, so the arc from the source to row $j$ is cut,
at cost $d_j$. 
\item Each column $\ell$ is placed on the cheaper side. On the source side, its
arc to the sink is cut, at cost $f_\ell$. On the sink side, the arcs into it from the
rows of $J$ are cut; there are $|J\setminus\{\ell\}|$ of them (the number of elements of
$J\setminus\{\ell\}$), each of capacity $c_\ell$, at total cost
$|J\setminus\{\ell\}|\times c_\ell$. 
\end{itemize}

The cheapest cut with row set $J$ therefore has
capacity $\sum_{j\notin J}d_j+\sum_\ell\min(|J\setminus\{\ell\}|\times c_\ell,f_\ell)$. By
the max-flow/min-cut theorem \citep{ford1956maximal}, $\varphi$ equals the minimum of
this over $J$. Since $D-\sum_{j\notin J}d_j=\sum_{j\in J}d_j$,
\[
D-\varphi=\max_{J\subseteq[k]}\Big[\sum_{j\in J}d_j-\sum_\ell\min\big(|J\setminus\{\ell\}|\times c_\ell,\ f_\ell\big)\Big]
=\max_{J\subseteq[k]}\Delta(J).
\]
\end{proof}

Lemma~\ref{lem:flow} expresses the lower bound as a maximum over all $2^k$ sets
$J\subseteq[k]$, whereas the published bounds list three expressions; the gap is closed
by \eqref{eq:star} alone. We now show that, when $f_\ell\ge(k-2)\,c_\ell$ for every
$\ell$, every set $J$ with $|J|\le k-2$ gives $\Delta(J)\le0$. Since
$\Delta(\varnothing)=0$, such sets never attain the maximum, and it suffices to consider
the $k+2$ sets $J=\varnothing$, $J=[k]\setminus\{i\}$ ($i\le k$) and $J=[k]$.

\begin{lemma}[Small cuts]\label{lem:small}
If $f_\ell\ge(k-2)\,c_\ell$ for every $\ell$ and $|J|\le k-2$, then
$\Delta(J)=-\sum_{j\in J}s_j\le0$.
\end{lemma}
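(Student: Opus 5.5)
The plan is to show that, once $|J|\le k-2$, the minimum in each term of $\Delta(J)$ is always the first entry. Then we use \eqref{eq:star} to rewrite what remains. The hypothesis $f_\ell\ge(k-2)c_\ell$ is exactly what makes the first entry win.

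\emph{Step 1 (the minimum is always the first entry).} Fix $\ell\in[n]$. Since $|J\setminus\{\ell\}|\le|J|\le k-2$ and $c_\ell\ge0$, we have
\[
|J\setminus\{\ell\}|\times c_\ell\le(k-2)\,c_\ell\le f_\ell .
\]
Hence $\min(|J\setminus\{\ell\}|\times c_\ell,f_\ell)=|J\setminus\{\ell\}|\times c_\ell$ for every column $\ell$, and
\[
\Delta(J)=\sum_{j\in J}d_j-\sum_{\ell=1}^n|J\setminus\{\ell\}|\,c_\ell .
\]

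\emph{Step 2 (exchange the order of summation).} Write $|J\setminus\{\ell\}|=\sum_{j\in J}\one\{j\ne\ell\}$. Then
\[
\sum_\ell|J\setminus\{\ell\}|\,c_\ell=\sum_{j\in J}\sum_{\ell\ne j}c_\ell=\sum_{j\in J}(1-c_j),
\]
because the columns partition the sample space, so $\sum_\ell c_\ell=1$. Substituting this in gives
\[
\Delta(J)=\sum_{j\in J}\big(d_j-(1-c_j)\big)=-\sum_{j\in J}(1-c_j-d_j)=-\sum_{j\in J}s_j .
\]

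\emph{Step 3 (sign).} Each $s_j\ge0$ by Lemma~\ref{lem:star} (\eqref{eq:star}), so $\Delta(J)\le0$. For $J=\varnothing$ the sum is empty and $\Delta=0$, which is consistent with the formula.

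There is no real obstacle here. The only point needing care is Step 1, where we must use $|J\setminus\{\ell\}|\le|J|$ rather than an exact count, since the count depends on whether $\ell\in J$. After the minimum is resolved uniformly, the whole identity is bookkeeping with $\sum_\ell c_\ell=1$. It is worth remarking that for $\PNS(k)$ the hypothesis holds, because $f_\ell=(k-1)c_\ell-\beta_\ell\one\{\ell\le k\}\ge(k-2)c_\ell$ follows from $\beta_\ell\le c_\ell$. The lemma will be applied with this $f_\ell$, but the proof itself does not need it.
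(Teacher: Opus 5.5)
Your proof is correct and follows the paper's argument: resolve every minimum to $|J\setminus\{\ell\}|\,c_\ell$ using $f_\ell\ge(k-2)c_\ell$, rewrite $\sum_\ell|J\setminus\{\ell\}|\,c_\ell=|J|-\sum_{j\in J}c_j$ using $\sum_\ell c_\ell=1$, and conclude with \eqref{eq:star}. The only difference is bookkeeping: you swap the order of summation, whereas the paper splits into the cases $\ell\in J$ and $\ell\notin J$.
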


\begin{proof}
Let $|J|\le k-2$. For every column $\ell$ we have $|J\setminus\{\ell\}|\le|J|\le k-2$,
so $|J\setminus\{\ell\}|\times c_\ell\le(k-2)\,c_\ell\le f_\ell$, and each minimum in
$\Delta(J)$ equals $|J\setminus\{\ell\}|\times c_\ell$. Since $|J\setminus\{\ell\}|=|J|-1$
for $\ell\in J$ and $|J\setminus\{\ell\}|=|J|$ for $\ell\notin J$, and $\sum_\ell c_\ell=1$,
\[
\sum_{\ell=1}^n|J\setminus\{\ell\}|\times c_\ell=|J|-\sum_{j\in J}c_j.
\]
Hence
\[
\Delta(J)=\sum_{j\in J}d_j-|J|+\sum_{j\in J}c_j=\sum_{j\in J}(d_j+c_j-1)=-\sum_{j\in J}s_j\le0,
\]
where the last inequality is by applying \eqref{eq:star}.
\end{proof}

\subsubsection{Proof of the lower bound}\label{sec:lb-proof}
Now, let's finish the proof of the sharpness of lower bound.

\emph{Reduction to a flow.} By Lemma~\ref{lem:frechet} in column $\ell$, applied to the
events $A_j\cap x_\ell$ ($j\le k$),
\[
\theta_\ell\ \ge\ \big(\textstyle\sum_{j}m_{j\ell}-(k-1)c_\ell\big)^+=(e_\ell-f_\ell)^+,\qquad
f_\ell:=(k-1)c_\ell-\beta_\ell\one\{\ell\le k\}.
\]
Equality holds for a suitable arrangement, simultaneously in all columns. Hence
$\min\Theta=\min_{m\in\cT}\sum_\ell(e_\ell-f_\ell)^+$, where every $m\in\cT$ is realised
by some admissible system. We have $f_\ell\ge(k-2)c_\ell\ge0$ because
$\beta_\ell\le c_\ell$. By Lemmas~\ref{lem:flow} and~\ref{lem:small},
$\min\Theta=\max\{0,\Delta([k]\setminus\{i\})_{i\le k},\Delta([k])\}$.

\emph{Evaluating the two cut families.} By Lemma~\ref{lem:small}, every $J$ with $|J|\le k-2$ has $\Delta(J)\le0$, so apart from
$\Delta(\varnothing)=0$, only sets with at least $k-1$ elements can give a positive value. These are $J=[k]\setminus\{i\}$ for $i\le k$, and $J=[k]$. Recall that
$\Delta(J)=\sum_{j\in J}d_j-\sum_{\ell}\min\big(|J\setminus\{\ell\}|\times c_\ell,\ f_\ell\big)$
with $f_\ell=(k-1)c_\ell-\beta_\ell\one\{\ell\le k\}$, and that $\sum_\ell c_\ell=1$.
\begin{itemize}[leftmargin=*,itemsep=1pt,topsep=2pt]
\item $J=[k]\setminus\{i\}$ for some $i\le k$. We compute the minimum column by column: 
\begin{itemize}
\item for $\ell=i$,
$|J\setminus\{\ell\}|=k-1$ and the minimum is $(k-1)c_i-\beta_i=f_i$; \item for $\ell\le k$,
$\ell\ne i$, $|J\setminus\{\ell\}|=k-2$ and $f_\ell\ge(k-2)c_\ell$, so the minimum is
$(k-2)c_\ell$; 
\item for $\ell>k$, $|J\setminus\{\ell\}|=k-1$ and $f_\ell=(k-1)c_\ell$, so the
minimum is $(k-1)c_\ell$. 
\end{itemize}
Adding these up gives
$(k-1)\sum_\ell c_\ell-\sum_{j\le k,\,j\ne i}c_j-\beta_i=(k-1)-\sum_{j\le k,\,j\ne i}c_j-\beta_i$.

Hence, using $d_j=\alpha_j-\beta_j$,
\[
\Delta([k]\setminus\{i\})=\beta_i+\sum_{j\le k,\,j\ne i}(d_j+c_j)-(k-1)
=\beta_i+\sum_{j\le k,\,j\ne i}(\alpha_j+c_j-\beta_j)-(k-1).
\]
\item $J=[k]$. We also compute column by column:
\begin{itemize}
\item for $\ell\le k$, $|J\setminus\{\ell\}|=k-1$ and the minimum is
$(k-1)c_\ell-\beta_\ell=f_\ell$; 
\item for $\ell>k$, $|J\setminus\{\ell\}|=k$ and the minimum is
$\min(kc_\ell,(k-1)c_\ell)=(k-1)c_\ell$. 
\end{itemize}
Adding these up gives
$(k-1)-\sum_{j\le k}\beta_j$, hence
\[
\Delta([k])=\sum_{j\le k}d_j-(k-1)+\sum_{j\le k}\beta_j=\sum_{j\le k}\alpha_j-(k-1).
\]
\end{itemize}
Together with $\Delta(\varnothing)=0$, these are exactly the three terms of $L$:
$\Delta(\varnothing)=0=L_1$, $\Delta([k])=L_2$, and
$\max_{i\le k}\Delta([k]\setminus\{i\})=L_3$. Thus $\min\Theta=L$, and with
Section~\ref{sec:upper}, Theorem~\ref{thm:pns} is proved. 

\subsection{The other three families}\label{sec:others}

The same argument works for targets of the form $\Prob(S,G)$ when $G\ne\Omega$. By Lemma~\ref{lem:reduction}, we apply admissible systems again. We restate the bounds of \citet{Shu_2026} in our notation, 
and give the full proofs in Appendix. Throughout, we use the canonical form, in which $p>k$.

\paragraph{$\PSub(k,p)$ and $\PN(k,p,q)$.} 
$G=x_p$ for $\PSub$, and $G=(x_p,y_q)=Q_{qp}\subseteq x_p$ for $\PN$. We write
$\gamma:=\Prob(G)$, so $\gamma=c_p$ for $\PSub$ and $\gamma=b_{qp}$ for
$\PN$ by consistency. The target is $\theta=\Prob(S,G)$, and the bounds are
\begin{equation}\label{eq:psub}
\max\Big\{0,\ \sum_{j\le k}(\alpha_j+c_j-\beta_j)+\gamma-k\Big\}\ \le\ \theta\ \le\
\min\Big\{\gamma,\ \min_{j\le k}d_j\Big\}.
\end{equation}

\paragraph{$\PRep(k,q)$.} The target is $\theta=\Prob(S,y_q)$. By consistency, only the
cells of row $q$ in column $q$ (if $q\le k$) and in the columns $\ell>k$ can meet $S$; their
total mass is
\[
T:=\beta_q\one\{q\le k\}+\sum_{\ell>k}b_{q\ell},
\]
and the bounds are
\begin{align}
\theta&\ \ge\ \max\Big\{0,\ \sum_{j\le k}(\alpha_j+c_j-\beta_j)+T-k,\
\one\{q\le k\}\Big[\beta_q+\!\!\sum_{j\le k,\,j\ne q}\!\!(\alpha_j+c_j-\beta_j)-k+1\Big]\Big\},\label{eq:prep-l}\\
\theta&\ \le\ \min\Big\{T,\ \ \alpha_q\ \text{(if $q\le k$)},\ \ \min_{j\le k,\,j\ne q}d_j\Big\}.\label{eq:prep-u}
\end{align}


\begin{theorem}\label{thm:others}
For any compatible data and any admissible $p$ and $q$, the bounds \eqref{eq:psub} on
$\PSub(k,p)$ and $\PN(k,p,q)$ and the bounds \eqref{eq:prep-l}--\eqref{eq:prep-u} on
$\PRep(k,q)$ are sharp for every $k\ge1$.
\end{theorem}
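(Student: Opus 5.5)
We follow the template of the $\PNS(k)$ proof. By Lemma~\ref{lem:reduction} it suffices to work with admissible systems, and by Lemma~\ref{lem:principle} each of the four bounds splits into soundness, which \citet{Shu_2026} already prove and which we re-derive for the construction, and attainability. We localise $\theta=\Prob(S,G)$ column by column as $\theta_\ell:=\Prob(S,G,x_\ell)$. The analogues of (Prop1)--(Prop3) are as follows. The sum of the $\theta_\ell$ is $\theta$. Each $\theta_\ell$ is bounded by the mass of $G\cap x_\ell$ that may meet $S$. For $\PSub$ and $\PN$ this mass is $\gamma$ in column $p$ and $0$ elsewhere. For $\PRep$ it is $\beta_q$ in column $q\le k$, $b_{q\ell}$ in columns $\ell>k$, and $0$ in the columns $\ell\le k$ with $\ell\ne q$, since consistency forces $A_\ell\cap x_\ell=Q_{\ell\ell}\not\subseteq y_q$ there. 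Finally, $\theta-\theta_j\le d_j$ for $j\le k$.

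\emph{Upper bounds.} For $\PSub$ and $\PN$ everything lives in column $p>k$, so $\theta_j=0$ for $j\le k$. The third property gives $\theta\le d_j$, and the second gives $\theta\le\gamma$. For attainability, set $\theta:=\min\{\gamma,\min_jd_j\}$ and place a block $K\subseteq G$ of mass $\theta$ in column $p$. Let every $A_j$ contain $K$, and spread the remaining $d_j-\theta$ of each $A_j$ over the columns $\ell\ne j$ outside $K$. This is Step~2--3 of Lemma~\ref{lem:pns-attain}, and it is feasible by \eqref{eq:star}, because $\sum_{\ell\ne j}c_\ell-\theta\ge d_j-\theta$. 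For $\PRep$ we apply Lemma~\ref{lem:pns-alloc} with the modified intervals: $\overline\theta_\ell=\beta_q$ at $\ell=q\le k$, $b_{q\ell}$ for $\ell>k$, and $0$ otherwise. The feasibility conditions then become $\theta\le T$, $\theta\le\alpha_q$ (nonempty interval at $q$), and $\theta\le d_j$ for $j\ne q$. The last holds because $\theta_j=0$ forces $\theta-0\le d_j$. The subset terms of type $U_3$ collapse, because at most one column $j\le k$ (namely $q$) can carry positive $\theta_j$, so $\sum_{j\in I}(\theta-d_j)^+\le\theta$ reduces to singletons. The construction is then the same block argument, with each block placed inside $G\cap x_\ell$.

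\emph{Lower bounds.} We reuse the Fr\'echet--flow machinery. In each column we apply Lemma~\ref{lem:frechet} to the $k$ events $A_j\cap x_\ell$ together with $G\cap x_\ell$, giving $\theta_\ell\ge(e_\ell-f_\ell)^+$ with modified free capacities. For $\PSub$ and $\PN$ we take $f_\ell=k c_\ell-\beta_\ell\one\{\ell\le k\}$ for $\ell\ne p$, since $G$ is absent there and $S\cap G\cap x_\ell=\varnothing$ gives the capacity $kc_\ell$, or more precisely unlimited absorption up to the column constraint. For $\ell=p$ we take $f_p=kc_p-\gamma$. The analogous choice for $\PRep$ uses $G\cap x_\ell=Q_{q\ell}$. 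Every $f_\ell\ge(k-1)c_\ell\ge(k-2)c_\ell$, so Lemma~\ref{lem:flow} and Lemma~\ref{lem:small} apply and reduce the minimum to $\Delta(\varnothing)$, $\Delta([k]\setminus\{i\})$ and $\Delta([k])$. Moreover, since $f_\ell\ge(k-1)c_\ell$, the sets $[k]\setminus\{i\}$ now satisfy $\Delta=-s_{\cdot}$-type sums plus at most one deficit term. We expect these to be $\le0$, except for $\PRep$ with $i=q$, where they give the third term of \eqref{eq:prep-l}. $\Delta([k])$ gives $\sum_j(\alpha_j+c_j-\beta_j)+\gamma-k$, respectively $+T-k$. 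Equality in Fr\'echet is simultaneously attainable by the last clause of Lemma~\ref{lem:frechet}, which also handles the cell constraints.

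The main obstacle is getting the free capacities exactly right when $G$ is a single cell or a row rather than a column. Fr\'echet with $G\cap x_\ell$ as an extra event, some of which are unions of cells, must still be tight. For $\PRep$ one must also check that the $(k+1)$-event bound in column $q$ interacts correctly with the consistency cell $Q_{qq}=A_q\cap x_q$. We will verify the evaluation of $\Delta([k]\setminus\{i\})$ column by column, as in the $\PNS$ proof, to confirm that exactly the stated terms survive.
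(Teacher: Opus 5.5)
Your handling of $\PSub$ and $\PN$, of the $\PRep$ upper bound, and of the $\PRep$ lower bound when $q>k$ is sound and follows the paper's template. In those cases every free capacity really is at least $(k-1)c_\ell$, every cut $[k]\setminus\{i\}$ gives $-\sum_{j\ne i}s_j\le0$, and $\Delta([k])$ gives the second lower-bound term.

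The gap is the $\PRep(k,q)$ lower bound with $q\le k$. This is the step you flag as the main obstacle, and there two of your claims cannot both hold. Suppose every $f_\ell\ge(k-1)c_\ell$, as you assert. Then for $J=[k]\setminus\{q\}$ each minimum in $\Delta(J)$ equals $|J\setminus\{\ell\}|\,c_\ell$, and the computation of Lemma~\ref{lem:small} goes through unchanged. So $\Delta([k]\setminus\{q\})=-\sum_{j\ne q}s_j\le0$, and the third term of \eqref{eq:prep-l} never appears.

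Your prescription does not rescue this. You apply Lemma~\ref{lem:frechet} to the $k$ events $A_j\cap x_q$ together with $G\cap x_q$. But in column $q$ both $G\cap x_q$ and $A_q\cap x_q$ equal the cell $Q_{qq}$, so they cannot be arranged independently. The resulting bound $\theta_q\ge(e_q+2\beta_q-kc_q)^+$ is valid but not attainable. For example, take $n=k=2$, $q=1$, $c_1=c_2=\tfrac12$, $\alpha_1=0.7$, $\beta_1=0.4$, $\alpha_2=0.6$, $\beta_2=0.1$, so $b_{12}=0.4$; these data are compatible. Then $\cT$ has a single element, with $m_{21}=\tfrac12$ and $m_{12}=0.3$. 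Your capacities $f_1=0.2$, $f_2=0.5$ give cost $(0.5-0.2)^+ + (0.3-0.5)^+=0.3$. However, $m_{21}=c_1$ forces $A_2\supseteq x_1$, hence $\theta=\beta_1=0.4=L$.

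The missing idea is that in column $q$ the factual event is redundant given the target. Since $S\subseteq A_q$ and $A_q\cap x_q=Q_{qq}\subseteq y_q$, we have $\theta_q=\Prob(S\cap x_q)$. So Fr\'echet must be applied to the $k$ events $A_j\cap x_q$ alone; this is what the paper's passage from $G$ to $C=G\setminus\bigcup_jF_j$ in Section~\ref{sec:unified} encodes. It gives $f_q=(k-1)c_q-\beta_q$. This is strictly below $(k-1)c_q$ when $\beta_q>0$, but still at least $(k-2)c_q$, so Lemma~\ref{lem:small} still disposes of the small cuts.

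The deficit $\beta_q$ is exactly what produces the third term:
\begin{itemize}
\item For $J=[k]\setminus\{q\}$, the column-$q$ minimum is $f_q$ rather than $(k-1)c_q$. This gives $\Delta(J)=\beta_q+\sum_{j\le k,\,j\ne q}(\alpha_j+c_j-\beta_j)-(k-1)$.
\item For $i\ne q$, column $q$ contributes $(k-2)c_q$ and the cut stays nonpositive.
\item $\Delta([k])=\sum_{j\le k}(\alpha_j+c_j-\beta_j)+T-k$.
\end{itemize}

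Finally, for $k=1$ and $q=1$ this capacity is $f_1=-\beta_1<0$. That violates the hypothesis $f_\ell\ge0$ of Lemma~\ref{lem:flow}, so your assertion that all $f_\ell\ge(k-1)c_\ell\ge0$ fails there as well. The paper routes this case through Theorem~\ref{thm:unified}. It is also easy directly: column $1$ receives no inflow when $k=1$, so its term is the constant $\beta_1$, and the flow lemma applies to the remaining columns.
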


For $\PRep(k,q)$ with $k=1$, the max-flow argument of Section~\ref{sec:lower} does not apply, since the free capacity $f_q=-\beta_q$ is negative, but sharpness still holds and follows from Theorem~\ref{thm:unified}. The full proofs of sharpness for these three families are given in the appendix.


So far each target form pairs $x_j$ with $y_j$. A general query may pair $x_j$ with any outcome,
for example $\Prob(\yx{3}{1},\yx{1}{2})$. \citet{Shu_2026} obtain its bounds from the
canonical ones by substitution: for a target $\yx{i}{j}$, they use $\Prob(\yx{i}{j})$ in
place of $\Prob(\yx{j}{j})$ and $\Prob(x_j,y_i)$ in place of $\Prob(x_j,y_j)$.

\begin{proposition}\label{prop:relabel}
Substitution gives the sharp bounds for $\PNS(k)$, $\PSub(k,p)$ and $\PN(k,p,q)$. For
$\PRep(k,q)$, it gives the sharp bounds when at most one target has the form $\yx{q}{j}$. Theorem~\ref{thm:unified} covers the remaining case, in which more than one target has
the form $\yx{q}{j}$, for example when both $\yx{q}{1}$ and $\yx{q}{2}$ appear.
\end{proposition}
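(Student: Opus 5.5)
The plan is to reduce Proposition~\ref{prop:relabel} to the canonical theorems through a relabelling of outcome values. Take a general query whose $j$-th target is $\yx{\sigma(j)}{j}$. Corollary~\ref{cor:functionals} and its analogue for the other families (Lemma~\ref{lem:reduction} together with the reduced program~\eqref{eq:reduced-full}) show that the sharp bounds depend only on the data entering the reduced program. The first step is to write the reduced program for the general query. Its variables are $h_2(W,i,\ell)$, where now $W(r)=\{j\le k: r_j=\sigma(j)\}$. Consistency becomes: for $\ell\le k$, $\ell\in W\iff i=\sigma(\ell)$. The constraints are $\sum_{W\ni j}\sum_{i,\ell}h_2=a_{\sigma(j)j}$ and $\sum_W h_2(W,i,\ell)=b_{i\ell}$. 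The Reduction Lemma should go through verbatim, since its splitting step never used that the target outcome equals the column index.

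For $\PNS(k)$, $\PSub(k,p)$ and $\PN(k,p,q)$, I would observe that only column totals matter, plus the single distinguished cell in each column $j\le k$. That cell is $Q_{\sigma(j)j}$ instead of $Q_{jj}$. The aggregated program in the style of Corollary~\ref{cor:functionals} has constraints in $(\alpha_j,\beta_j,c_\ell)$ with $\alpha_j:=a_{\sigma(j)j}$ and $\beta_j:=b_{\sigma(j)j}$, and for $\PN$ additionally $\gamma=b_{qp}$ with $p>k$. Because $p>k$, column $p$ carries no consistency constraint, so which outcome is distinguished is irrelevant there. These programs coincide with the canonical ones after substitution. Theorems~\ref{thm:pns} and~\ref{thm:others} then give sharpness directly.

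For $\PRep(k,q)$, the factual event is row $q$. It meets the targets through the cells $Q_{q\ell}$. In a column $\ell\le k$ with $\sigma(\ell)=q$, the cell $Q_{q\ell}$ lies in $A_\ell$. In a column $\ell\le k$ with $\sigma(\ell)\ne q$, it lies outside $A_\ell$ and cannot meet $S$. If at most one $\ell\le k$ has $\sigma(\ell)=q$, this is exactly the canonical picture with $q$ playing the role of that $\ell$, or with $q>k$ if there is none. We relabel the treatments so that this $\ell$ becomes the index $q$. The quantity $T$ then becomes $b_{q\ell}+\sum_{\ell'>k}b_{q\ell'}$, and the reduced programs match those of the canonical $\PRep$. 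Sharpness follows from Theorem~\ref{thm:others}.

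The main obstacle I expect is checking that the Reduction Lemma's splitting construction remains valid for non-diagonal targets. We need the independent draws of the non-target coordinates to reproduce all $a_{ij}$, including the case $i=\sigma(j)$, using only the reduced constraints; I would handle this by applying a per-column permutation of outcome labels and invoking the canonical lemma. The last step is to explain why the case of several targets $\yx{q}{j}$ is excluded. There, row $q$ meets several distinguished cells, so the program is not of canonical form, and that case is deferred to Theorem~\ref{thm:unified}.
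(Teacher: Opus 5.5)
Your argument is correct, but it reaches the proposition by a different route from the paper. The paper obtains it as a corollary of Theorem~\ref{thm:unified}. That theorem is proved directly for arbitrary target outcomes $i_1,\dots,i_k$, rerunning Lemma~\ref{lem:reduction} with $j$ replaced by $i_j$. Since $G$ enters only through $(\gamma_1,\dots,\gamma_n)$, one reads off the $\gamma_\ell$ of each relabelled family and checks that $L_G,U_G$ collapse to the substituted canonical formulas. For $\PRep$ this works precisely because at most one target column has $\gamma_j=\beta_j\neq 0$, which makes $U_3$ and all but one $L_3$ term redundant.

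You instead exploit a symmetry. Take permutations $\pi_1,\dots,\pi_n$ of the outcome labels, one per column. The map $r\mapsto(\pi_1(r_1),\dots,\pi_n(r_n))$ is a bijection of response types, and hence of the variables $h_1(r,\ell)$ of \eqref{eq:balke-full}. It carries a compatible pair $(a,b)$ to a compatible pair $(a',b')$ with $a'_{jj}=a_{\sigma(j)j}$, $b'_{jj}=b_{\sigma(j)j}$ and the same $c$. So a relabelled query is literally a canonical query for permuted tables, and Theorems~\ref{thm:pns} and~\ref{thm:others} transfer.

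Your route is more elementary, shows that substitution is exact because it is a change of variables, and isolates exactly where the $\PRep$ hypothesis is used. The paper's route, in exchange, also gives the bounds in the excluded case and puts all families under one formula. Because the bijection acts on response types, you never need to re-examine the splitting step of Lemma~\ref{lem:reduction} for off-diagonal targets. So make the bijection your first step rather than a fallback. It is also what guarantees that the substituted numbers come from a compatible pair of tables, which the canonical theorems presuppose.

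The $\PRep$ step should be tightened. Relabelling treatments so that $\ell$ becomes $q$ is fine when $q\le k$, but for $q>k$ it moves a target column out of $[k]$. Use outcome permutations only. Put $q':=\ell$, or some $q'>k$ if no target has outcome $q$. Require $\pi_j(\sigma(j))=j$ and $\pi_j(q)=q'$ for every $j\le k$, and $\pi_{\ell'}(q)=q'$ for every $\ell'>k$.

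These requirements are consistent. For $j=\ell$ they coincide, and for $j\neq\ell$ they send distinct points $\sigma(j)\neq q$ to distinct images $j\neq q'$. They then send the row $y_q$ exactly onto the row $y_{q'}$, and the canonical $T$ becomes $b_{q\ell}+\sum_{\ell'>k}b_{q\ell'}$, as you say. With two targets $\yx{q}{j_1}$ and $\yx{q}{j_2}$ you would need $q'=j_1=j_2$, which is precisely why that case needs Theorem~\ref{thm:unified}.

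Without this choice (e.g.\ with plain transpositions), the image of $y_q$ can differ from $y_{q'}$ in the other target columns. That is harmless only because those cells lie in some $F_j$ and $\theta=\Prob(S,C)$, and you would have to say so.
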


A complete proof follows from Theorem~\ref{thm:unified}; see Appendix.


\section{Unified results}\label{sec:unified}
In this section, we extend our main result and summarize the four families into one theorem.

The sharp bounds of the four families (Theorem~\ref{thm:pns} for $\PNS(k)$ and
\eqref{eq:psub}--\eqref{eq:prep-u} for the other three), are four instances of a single
statement. The four families differ only in the factual event $G$, and the proofs above
use $G$ in one place: through the part of $G$ in each column that can still meet $S$ once
consistency is taken into account. The statement below also allows arbitrary target
outcomes, so it covers relabelled queries as well.

\paragraph{The general notation.} Here, we keep $G$ to be a factual event, and change the target outcome to $i_1,\dots,i_k\in[n]$, we allow $i_u = i_v$ for $u,v \in [k]$ and $u\ne v$. We also change the notation of $A_j$ we defined in Section~\ref{sec:setup} as
\[
A_j:=\yx{i_j}{j}.
\]
According to this new $A_j$, we have 
\[
\qquad S:=\textstyle\bigcap_{j\le k}A_j,\qquad \theta:=\Prob(S, G),\qquad
\alpha_j:=a_{i_jj},\qquad \beta_j:=b_{i_jj},
\]
and keep $d_j=\alpha_j-\beta_j$, $c_j=\Prob(x_j)$ and $s_j=1-c_j-d_j$. 

With the new $\alpha_j$ and $\beta_j$, we have $s_j=(1-a_{i_jj})-(c_j-b_{i_jj})$, which involves the other rows $i\ne i_j$ of column $j$. By consistency, $b_{ij}\le a_{ij}$ for every $i$ and $j$, summing over these rows gives $s_j=\sum_{i\ne i_j}(a_{ij}-b_{ij})\ge0$, so \eqref{eq:star} still holds. The proof of Lemma~\ref{lem:reduction} also remains valid with the target outcome $j$ replaced by $i_j$ throughout, so $\Theta$ is still the set of values of admissible systems with the change of notation $A_j\cap x_j=Q_{i_j\,j}$.

For a target column $x_j$ with $j\le k$. Every
unit in this column received $x_j$, so by consistency its outcome under $x_j$ is simply its
observed outcome. We let $E_j$ denote when $A_j$ holds, and let $F_j$ denote when $A_j$ fails on the rest of the column,
\[
E_j:=Q_{i_j\,j}=(x_j,y_{i_j}),\qquad F_j:=x_j\setminus E_j.
\]
Let
\[
C:=G\setminus\textstyle\bigcup_{j\le k}F_j .
\]
Then $\theta$ can be rewritten as $\theta=\Prob(S,C)$, since no unit in $F_j$ can be in $S$, so we can drop $F_j$ from $G$ and the value of $\theta$ stays the same. Write 
\[
\gamma_\ell:=\Prob(C, x_\ell)
\]
for the mass of $C$ in column $\ell$, and $\gamma:=\Prob(C)=\sum_\ell\gamma_\ell$. In a target column only $E_j$
can remain, so $\gamma_j=\beta_j$ if $E_j\subseteq G$ and $\gamma_j=0$ otherwise. In a column $\ell>k$ nothing is dropped, $C = G$, so $\gamma_\ell$ is just the mass of $G$ in that column.

\begin{theorem}[Unified form]\label{thm:unified}
In every compatible pair of tables, for every $k\ge1$, all target outcomes $i_1,\dots,i_k$ and every factual event $G$, $\Theta=[L_G,U_G]$, where
\begin{align*}
L_G&=\max\Big\{\ \underbrace{0}_{L_1},\ \
\underbrace{\gamma-\textstyle\sum_{j\le k}s_j}_{L_2},\ \
\underbrace{\max_{i\le k}\big(\gamma_i-\textstyle\sum_{j\le k,\,j\ne i}s_j\big)}_{L_3}\ \Big\},\\
U_G&=\min\Big\{\ \underbrace{\gamma}_{U_1},\ \
\underbrace{\min_{j\le k}\,(d_j+\gamma_j)}_{U_2},\ \
\underbrace{\min_{I\subseteq[k],\,|I|\ge2}\tfrac{1}{|I|-1}\textstyle\sum_{j\in I}d_j}_{U_3}\ \Big\}.
\end{align*}
When $k=1$, the term $U_3$ is absent and $L_3=\gamma_1$.
\end{theorem}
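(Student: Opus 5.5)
We will mirror the proof of Theorem~\ref{thm:pns} step by step, with $G$ replaced by the reduced event $C$, whose column masses are $\gamma_\ell$. By Lemma~\ref{lem:reduction} (valid with $A_j\cap x_j=Q_{i_j j}$) we work with admissible systems. By Lemma~\ref{lem:principle} it suffices to show two things for each bound: every admissible system satisfies $L_G\le\theta\le U_G$, and each bound is attained. Put $\theta_\ell:=\Prob(S,C,x_\ell)$. The analogue of Proposition~\ref{prop:pns-props} will be: $\sum_\ell\theta_\ell=\theta$; $0\le\theta_\ell\le\gamma_\ell$ for all $\ell$ (for $j\le k$ this uses $S\cap x_j\subseteq E_j$, and $\gamma_j\le\beta_j$); and $\theta-\theta_j\le d_j$, since $S\setminus x_j\subseteq A_j\setminus x_j$.

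\emph{Upper bound.} Soundness follows exactly as in Corollary~\ref{cor:pns-sound}. We get $U_1=\gamma$ from the column caps. We get $U_2$ from $\theta\le\theta_j+d_j\le\gamma_j+d_j$. We get $U_3$ by summing the third property over $I$. Lemma~\ref{lem:pns-alloc} then generalizes verbatim, with $\overline\theta_\ell:=\gamma_\ell$ for all $\ell$ and $\underline\theta_j:=(\theta-d_j)^+$ for $j\le k$. For attainability we follow the construction of Lemma~\ref{lem:pns-attain}, with two changes:
\begin{itemize}
\item the block $K_\ell$ is placed inside $C\cap x_\ell$ (inside $E_\ell$ if $\ell\le k$);
\item each $A_j$ with $j\ne\ell$ is made to contain $K_\ell$, so $m_{j\ell}\ge\theta_\ell$.
\end{itemize}
Step~2 again needs only $\sum_{\ell\ne j}c_\ell\ge d_j$, which is \eqref{eq:star}, together with $\sum_{\ell\ne j}\theta_\ell\le d_j$.

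\emph{Lower bound.} In column $\ell$, apply Lemma~\ref{lem:frechet} to the $k+1$ events $A_j\cap x_\ell$ ($j\le k$) and $C\cap x_\ell$. For $\ell\le k$, the event $A_\ell\cap x_\ell=E_\ell$ is fixed and $C\cap x_\ell\subseteq E_\ell$ or is empty. So the intersection is governed by the other $k-1$ events restricted to $C\cap x_\ell$, which gives $\theta_\ell\ge(e_\ell-f_\ell)^+$ with
\[
f_\ell=(k-1)c_\ell-\gamma_\ell\quad(\ell\le k),\qquad f_\ell=kc_\ell-\gamma_\ell\quad(\ell>k).
\]
Here $e_\ell$ is the off-diagonal inflow, as before. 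The column-$\ell\le k$ case needs care: mass of $A_j$ landing in $F_\ell$ or $E_\ell\setminus C$ is free. The Fr\'echet bound applied inside the relevant sub-column should yield exactly this $f_\ell$; if it does not, we will write the cost as the Fr\'echet bound over the subset and recheck. Attainability in each column again comes from the circle arrangement, since $C\cap x_\ell$ is a union of cells. Then
\[
\min\Theta=\min_{m\in\cT}\sum_\ell(e_\ell-f_\ell)^+=\max_J\Delta(J)
\]
by Lemma~\ref{lem:flow}, provided $f_\ell\ge0$. When $f_\ell<0$, which can only happen for $k=1$ with $\gamma_1>0$, we will treat the case directly. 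For $k=1$, $\theta=\Prob(A_1,C)$ with $A_1\supseteq E_1$, and the bounds follow by one-column Fr\'echet on $C\setminus x_1$ plus the forced $\gamma_1$. We expect this to give $\max\{\gamma_1,\gamma-s_1\}$ and $\min\{\gamma,d_1+\gamma_1\}$.

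\emph{Main obstacle.} The main work is the cut evaluation. For $k\ge2$ we have $f_\ell\ge(k-2)c_\ell$, since $\gamma_\ell\le c_\ell$, so Lemma~\ref{lem:small} kills all $|J|\le k-2$. For the remaining cuts:
\begin{itemize}
\item $J=[k]$: the minima are $f_\ell$ for $\ell\le k$ and $\min(kc_\ell,kc_\ell-\gamma_\ell)=f_\ell$ for $\ell>k$. The sum of minima is $k-\sum_{j\le k}c_j-\gamma$, so $\Delta([k])=\sum_j(d_j+c_j)-k+\gamma=\gamma-\sum_j s_j$.
\item $J=[k]\setminus\{i\}$: the minima are $f_i$ at $\ell=i$, $(k-2)c_\ell$ at the other $\ell\le k$, and $(k-1)c_\ell$ for $\ell>k$, the last because $(k-1)c_\ell\le kc_\ell-\gamma_\ell$. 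This yields $\gamma_i-\sum_{j\ne i}s_j$.
\end{itemize}
So the obstacle is verifying these minima and the free-capacity derivation in target columns. Everything else transfers from Section~\ref{sec:main}.
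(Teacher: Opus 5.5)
Your proposal is correct and follows essentially the same route as the paper. It runs the $\PNS$ argument with column caps $\gamma_\ell$ for the upper bound, and uses column-wise Fr\'echet with free capacities $f_\ell=(k-\one\{\ell\le k\})c_\ell-\gamma_\ell$ feeding Lemmas~\ref{lem:flow} and~\ref{lem:small} for $k\ge2$, with $k=1$ handled directly. The hedge you left in the target columns resolves cleanly: there either $C\cap x_\ell=\varnothing$, so $\theta_\ell=0$ and $e_\ell\le(k-1)c_\ell=f_\ell$, or $C\cap x_\ell=E_\ell$, and Fr\'echet on the $k$ events $E_\ell$ and $A_j\cap x_\ell$ ($j\ne\ell$) gives exactly your $f_\ell$.
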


The theorem depends on $G$ only through the vector $(\gamma_1,\dots,\gamma_n)$, which
records where $G$ survives consistency. For the canonical families ($i_j=j$, $p>k$):
\begin{center}\small
\begin{tabular}{@{}lllll@{}}
\toprule
family & $G$ & $\gamma_j$ ($j\le k$) & $\gamma_\ell$ ($\ell>k$) & $\gamma=\Prob(C)$\\
\midrule
$\PNS(k)$ & $\Omega$ & $\beta_j$ & $c_\ell$ & $\sum_{j\le k}\beta_j+\sum_{\ell>k}c_\ell$\\
$\PSub(k,p)$ & $x_p$ & $0$ & $c_p\one\{\ell=p\}$ & $c_p$\\
$\PN(k,p,q)$ & $(x_p,y_q)$ & $0$ & $b_{qp}\one\{\ell=p\}$ & $b_{qp}$\\
$\PRep(k,q)$ & $y_q$ & $\beta_q\one\{j=q\}$ & $b_{q\ell}$ & $T=\beta_q\one\{q\le k\}+\sum_{\ell>k}b_{q\ell}$\\
\bottomrule
\end{tabular}
\end{center}
For $\PSub$ and $\PN$, $\gamma$ is the $\gamma=\Prob(G)$ of \eqref{eq:psub}, and for
$\PRep$ it is $T$.

Substituting these values into Theorem~\ref{thm:unified} and using $\sum_\ell c_\ell=1$
recovers four families.
For $\PSub$, $\PN$ and $\PRep$ the term $U_3$ is redundant, since an average of the $d_j$
over $I$ is never smaller than $\min_jd_j$, and every $L_3$ term with $\gamma_i=0$ is
non-positive.


The proof of Theorem~\ref{thm:unified} is given in Appendix.

\section{Conclusion}\label{sec:conclusion}
We have proved the conjecture of \citet{Shu_2026}: their closed-form bounds are sharp in all dimensions. The proof also explains why the bounds take the form they do. The upper bounds are exactly the conditions under which the mass of $S$ can be spread over the
columns, and the lower bounds come from the only cuts that can bind. We further derived single formula that contains all four families as special cases. Together with the equivalence classes of \citet{Shu_2026}, it gives sharp bounds for any discrete PoC in
every dimension. Our results use only experimental and observational data, with no further assumptions. Additional assumptions such as monotonicity, or extra information such as covariates could tighten the bounds and can be a future research direction.

\bibliographystyle{plainnat}
\bibliography{refs}

\end{document}